\newcommand{\alglinelabel}{%
  \addtocounter{ALC@line}{-1}
  \refstepcounter{ALC@line}
  \label
}
\theoremstyle{plain}
\newtheorem{theorem}{Theorem}
\newtheorem{proposition}[theorem]{Proposition}
\theoremstyle{definition}
\theoremstyle{remark}
\title{Early Time Classification with Accumulated Accuracy Gap Control}
\author{} 
\author[1]{Liran Ringel} 
\author[2]{Regev Cohen}
\author[2]{Daniel Freedman}
\author[2]{Michael Elad}
\author[1,3]{Yaniv Romano}
\affil[1]{Department of Computer Science, Technion IIT, Haifa, Israel}
\affil[2]{Verily AI, Israel}
\affil[3]{Department of Electrical and Computer Engineering, Technion IIT, Haifa, Israel}
\date{}
\begin{document}
\maketitle

\begin{abstract}
Early time classification algorithms aim to label a stream of features without processing the full input stream, while maintaining accuracy comparable to that achieved by applying the classifier to the entire input.
In this paper, we introduce a statistical framework that can be applied to any sequential classifier, formulating a calibrated stopping rule. This data-driven rule attains finite-sample, distribution-free control of the accuracy gap between full and early-time classification. We start by presenting a novel method that builds on the Learn-then-Test calibration framework to control this gap marginally, on average over i.i.d. instances.
As this algorithm tends to yield an excessively high accuracy gap for early halt times, our main contribution is the proposal of a framework that controls a stronger notion of error, where the accuracy gap is controlled conditionally on the accumulated halt times.
Numerical experiments demonstrate the effectiveness, applicability, and usefulness of our method. We show that our proposed early stopping mechanism reduces up to 94\% of timesteps used for classification while achieving rigorous accuracy gap control.
\end{abstract}

\section{Introduction}

\begin{figure}[h]
\centering
\includegraphics[width=0.9\textwidth]{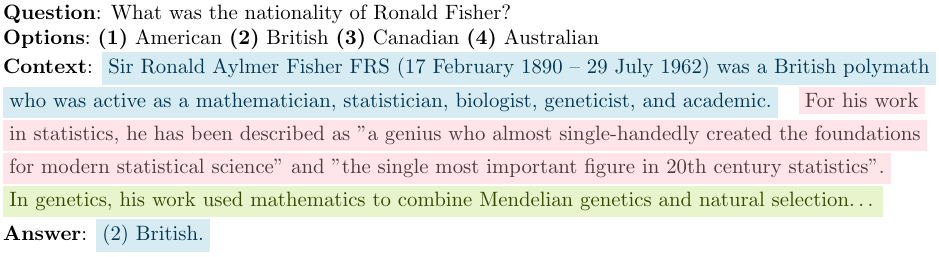}
\caption{\textbf{An illustration of a reading comprehension task.} An LLM sequentially processes the given document to find the answer to the question provided and, ideally, should stop scanning the document immediately after the required information is found. The context is taken from \href{https://en.wikipedia.org/wiki/Ronald_Fisher}{Wikipedia}.}
\label{fig:ronald_fisher_example}
\end{figure}

\noindent
The goal of early time series classification (ETSC) is to predict the label of a given input data stream as quickly as possible. Such methods are especially advantageous in scenarios requiring prompt predictive inference. 
For example, consider the problem of reading comprehension, illustrated in Figure~\ref{fig:ronald_fisher_example}. Suppose we employ an autoregressive large language model (LLM) to analyze a given document (context) and select an answer to the provided question. Given that the inference time of LLMs increases with the number of processed tokens (or sentences), we wish to terminate the processing of the context retrieved from the document as soon as the necessary information is found, rather than processing the entire document naively.
Other tasks for which ETSC is highly desired include real-time song identification (think of the \texttt{Shazam} application) and reducing radiation exposure in computational tomography (CT) systems, among many others.
In all of these applications, the objective is to stop the inference process early while preserving accuracy, as if the predictive model had been applied to the entire data stream.

Consider labeled pairs of the form $(X, Y)$ sampled i.i.d. from $P_{XY}$, where $X = (X^1, X^2, \ldots, X^{t_{\text{max}}}) \in \mathcal{X}$ represents an observed input sequence with a maximum length of $t_{\text{max}}$, e.g. a sequence of tokens representing sentences in a document and a question. The variable $Y \in \mathcal{Y} = \{1,\ldots,K\}$ is the unknown label we wish to predict, e.g. the correct answer to the given question.
Suppose we are handed a pre-trained classifier $\hat{f}: \mathcal{X} \rightarrow [0,1]^K$ that processes the input $X$ sequentially and, at each timestep $t$, maps $X^{\leq t} = (X^1, \ldots, X^t)$ to an estimated probability distribution over the labels.
We employ a stopping rule function that, at each timestep $t$, decides whether to stop the inference process \emph{only based on the data observed up to timestep $t$}. Denote the stopping time by $\hat{\tau}(X)\in\{1,\dots,t_{\text{max}}\}$ and let $\hat{Y}_{\text{early}}(\hat{\tau})$ and $\hat{Y}_{\text{full}}$ be the predicted labels obtained by $\hat{f}(X^{\leq \hat{\tau}(X)})$ and $\hat{f}(X)$, respectively.
With these notations in place, we define the \textit{accuracy gap} as the proportion of samples for which the classifier's prediction is correct when applied to the entire sequence but incorrect when the same classifier is applied only up to the early timestep $\hat{\tau}(X)$.

Let $\alpha \in (0,1)$, e.g., 10\%, be the tolerable accuracy gap,  representing the acceptable trade-off for early stopping. Denote by $\mathcal{D}_{\text{cal}} = \{ (X_i, Y_i) \}_{i=1}^{n_{\text{cal}}}$ a \emph{holdout calibration set}, with samples drawn i.i.d. from $P_{XY}$. Our initial objective is to leverage $\mathcal{D}_{\text{cal}}$ to identify an early stopping rule $\hat{\tau}(X)$ that minimizes the halt time while ensuring the accuracy gap remains below $\alpha$ with a probability of at least $1-\delta$:
\begin{equation}
\label{eq:marginal_guarantee}
\mathbb{P}_{\mathcal{D}_{\text{cal}}} \left ( R^{\text{marginal}}_{\text{gap}}(\hat{\tau}) \leq \alpha \right ) \geq 1-\delta,
\end{equation}
where
\begin{equation}
\label{eq:R_marginal}
  R_{\text{gap}}^{\text{marginal}}(\hat{\tau}) = \mathbb{E}_{P_{XY}} \left [ L_{\text{gap}}(Y,\hat{Y}^{\text{full}},\hat{Y}^{\text{early}}(\hat{\tau})) \right ],
\end{equation}
and
\begin{equation}
  \label{eq:accuracy_gap}
  L_{\text{gap}}(Y,\hat{Y}^{\text{full}},\hat{Y}^{\text{early}}(\hat{\tau})) = \left(\mathbb{I}_{Y = \hat{Y}^{\text{full}}} - \mathbb{I}_{Y = \hat{Y}^{\text{early}}(\hat{\tau})} \right)_+.
\end{equation}
Notably, the probability in \eqref{eq:marginal_guarantee} is taken over the randomness in $\mathcal{D}_{\text{cal}}$, and $\delta$ is a user-defined level, e.g., 1\%. The operator $(z)_+$ in \eqref{eq:accuracy_gap} returns the value $z$ if $z\geq0$ and zero otherwise, and the indicator function $\mathbb{I}_{a = b}$ equals 1 when $a = b$ and zero otherwise. In simpler terms, the expected value of $L_{\text{gap}}(Y,\hat{Y}^{\text{full}},\hat{Y}^{\text{early}}(\hat{\tau}))\in \{0,1\}$ reflects the proportion of samples in which the decision to stop early increases the error rate. We refer to~\eqref{eq:marginal_guarantee} as \emph{marginal risk control} as it states that the accuracy gap will not exceed $\alpha$, \emph{on average} over future observations and stopping times. In Section~\ref{sec:marginal_risk_control}, we present an algorithm that rigorously attains~\eqref{eq:marginal_guarantee}, termed the \emph{marginal method}.

While the marginal guarantee in \eqref{eq:marginal_guarantee} provides a controlled mechanism for early classification, it may not be entirely satisfying in most practical settings. This is because an algorithm that controls the accuracy gap over all possible sequences is permitted to perform poorly on sequences with early halt times while excelling on sequences with late halt times. This, in turn, can undermine the reliability of predictions for sequences with early halt times. Recognizing this limitation, our main contribution is a novel algorithm that aims to \emph{control the accuracy gap conditional on the halt time being less or equal to $t$}. More formally, let
\begin{equation}
\label{eq:R_leq_t}
    R_{\text{gap}}^{\leq t}(\hat{\tau}) = \mathbb{E}_{P_{XY}} \left [ L_{\text{gap}}(Y,\hat{Y}^{\text{full}},\hat{Y}^{\text{early}}(\hat{\tau})) \mid \hat{\tau}(X) \leq t \right ].
\end{equation}
Our goal is to formulate a stopping rule that achieves
\begin{equation}
\label{eq:conditional_guarantee}
\begin{aligned}
\mathbb{P}_{\mathcal{D}_{\text{cal}}} \left ( R_{\text{gap}}^{\leq t}(\hat{\tau}) \leq \alpha \ \text{for all} \ t\geq t_0 \right ) \geq 1-\delta,
\end{aligned}
\end{equation}
where $t_0$ is defined as the first timestep for which $P(\hat{\tau}(X) \leq t_0) > 0$, as otherwise \eqref{eq:R_leq_t} is undefined.
In particular, controlling \eqref{eq:conditional_guarantee} implies that we also control the accuracy gap marginally, as $R_{\text{gap}}^{\text{marginal}} = R_{\text{gap}}^{\leq t_{\text{max}}}$. Throughout this work, we refer to \eqref{eq:conditional_guarantee} as \emph{conditional risk control on the accumulated halt time}, or simply, \emph{conditional risk control}. In Section \ref{sec:conditional_risk} we present an algorithm that achieves this goal, which we refer to as the \emph{conditional method}.

It is crucial to distinguish \eqref{eq:conditional_guarantee} from the stronger time- or instance-conditional guarantee, where the objective is to control the accuracy gap for a specific timestep $t$ or for a specific $X$. Unfortunately, attainment of non-trivial stopping rules with time- or instance-conditional risk control is infeasible without resorting to unrealistic assumptions \cite{vovk2012conditional,lei2014distribution,foygel2021limits}, which we aim to avoid: we pursue distribution-free, finite-sample guarantees. As a consequence, we posit that the risk in \eqref{eq:conditional_guarantee} strikes a reasonable compromise between controlling the relatively weak marginal risk and the unattainable time- or instance-conditional risk.

\subsection{A Motivating Example: Reading Comprehension}
\label{sec:teaser}

\begin{figure}[ht]
\centering
        \includegraphics[width=0.48\textwidth]{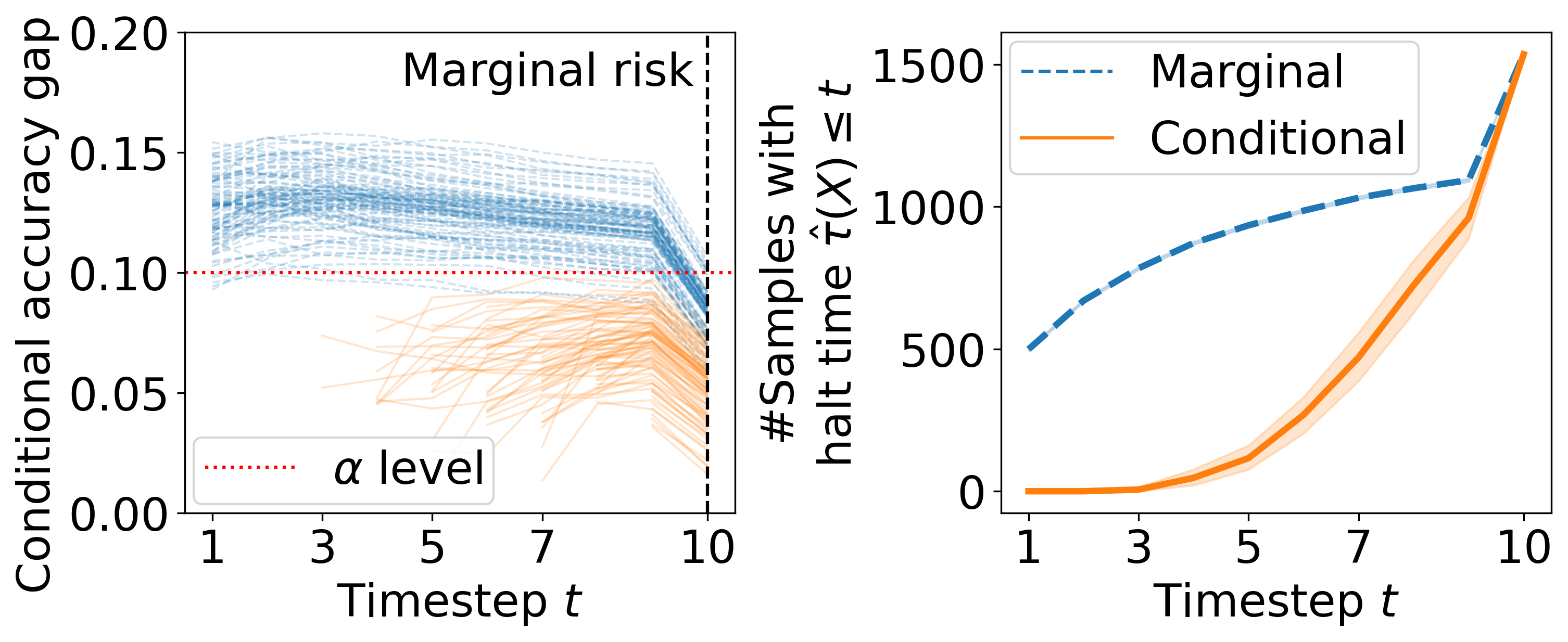}
        \vskip -0.1in
        \caption{\textbf{Comparison between the marginal and conditional methods for the reading comprehension task.} Nominal accuracy gap level is $\alpha=10\%$ and $\delta=1\%$. Left: empirical conditional accuracy gap, $\hat{R}_{\text{gap}}^{\leq t}$, across 100 trials; each curve corresponds to a different random split of the calibration and test data. Right: accumulated halt times as a function of $t$, averaged over 100 random splits; the shaded area represents a 95\% confidence interval.}
        \label{fig:quality_conditional_vs_marginal}
\end{figure}

To emphasize the importance of the transition from the marginal \eqref{eq:marginal_guarantee} to the conditional guarantee \eqref{eq:conditional_guarantee}, we now return to the reading comprehension problem discussed earlier. The \texttt{QuALITY} dataset \cite{pang2022quality} consists of 4609 triplets, containing (i) a question, (ii) multiple choice answers, and (iii) a long context, with each triplet accompanied by the correct labeled answer.
We utilize a pre-trained autoregressive LLM as the base predictive model. This classifier sequentially processes the context and selects an answer from the four possibilities. For the calibration of the early stopping rule, we employ 3073 labeled samples to form $\mathcal{D}_{\text{cal}}$ while reserving the remaining 1536 samples for testing. Following this, we compare the performance of the proposed marginal and conditional calibration methods presented in Sections \ref{sec:marginal_risk_control} and \ref{sec:conditional_risk}, respectively. Specifically, we report two performance metrics: (i) $\hat{R}_{\text{gap}}^{\leq t}$, defined as the empirical accuracy gap of samples with a halt time $\hat{\tau}(X)$ equal to or less than $t$; and (ii) the cumulative number of samples on which the model halted until timestep $t$. 

The results are presented in Figure \ref{fig:quality_conditional_vs_marginal}. Following the left panel in that figure, we can see that while the two approaches control the marginal risk, the conditional accuracy gap $\hat{R}_{\text{gap}}^{\leq t}$ tends to be higher than the desired $10\%$ level for the marginal method.
This implies that the marginal stopping rule tends to halt too early, as evidenced in the right panel of Figure~\ref{fig:quality_conditional_vs_marginal}, where we can see the relatively large number of samples halted at timestep $t=1$.
In contrast, the conditional approach maintains the conditional accuracy gap below $\alpha$ across all timesteps (left panel) while attaining an effective early stopping mechanism (right panel).

\subsection{Preview of our methods}
\label{sec:preview_of_our_methods}

The crux of this work is the formulation of a stopping rule $\hat{\tau}(X)$ that attains valid risk control.
Denote by $\hat{\pi}: \mathcal{X} \to [0,1]$ a score that heuristically reflects how confident the classifier is in its prediction based on $X^{\leq t}$.
For example, $\hat{\pi}(X^{\leq t})$ can be the largest softmax value of a neural net classifier.
With this in place, we can formulate 
\begin{equation}
\hat{\tau}(X) = \tau_{\hat{\underline{\lambda}}}(X) = \min \{ t : \hat{\pi}(X^{\leq t}) \geq \underline{\hat{\lambda}}_t \ \text{or} \ t=t_\text{max}\}, 
\end{equation}
where $\hat{\underline{\lambda}}_t$ is a hyperparameter, being the $t$-th element in a vector of thresholds $\underline{\hat{\lambda}} = (\underline{\hat{\lambda}}_1, \underline{\hat{\lambda}}_2,\dots,\underline{\hat{\lambda}}_{t_{\text{max}}})$.
In plain words, we choose to halt the inference process for the first time $t$ that the classifier is ``confident enough'' in its prediction.
But how can we properly choose the vector of hyperparameters $\underline{\hat{\lambda}}$ that attains a valid risk control?
Notably, this task becomes particularly challenging when dealing with a large number of hyperparameters that require tuning; in our case, we have $t_{\text{max}}$ parameters.
An improper choice of hyperparameters can fail to achieve the desired accuracy gap on future test data, and this problem is especially pronounced when the accuracy gap is a non-monotone function of the hyperparameters, which may occur in our setting due to the complex nature of the pre-trained classifier at hand.

To tackle this challenge, we build on the \emph{Learn then Test} (LTT) framework \cite{angelopoulos2021learn} that formulates the problem of finding hyperparameters that yield risk control as a multiple hypothesis testing problem, where each hypothesis corresponds to a different choice of hyperparameters.
However, in situations with a vast array of parameters that need to be tuned, this method faces two practical obstacles \citep{laufer2022efficiently}.
First, the sheer volume of potential configurations, which grows exponentially with $t_{\text{max}}$, makes an extensive search of hyperparameters infeasible.
Second, the LTT method may experience a loss of power when confronted with such an exponential number of tests. This drawback can result in our algorithm stopping too late, potentially missing the opportunity to select a more refined set of hyperparameters for the downstream task.
 
To alleviate these limitations, we propose a two-stage calibration framework that exploits the special structure of the underlying ETSC problem. In the first stage, we find a candidate set of hyperparameters using a novel computationally efficient procedure. Then, we apply a multiple testing procedure on the candidate set to select a valid set of hyperparameters that yields risk control. Overall, the novel algorithm we introduce can efficiently handle long sequences, while selecting a data-adaptive threshold vector $\hat{\underline{\lambda}}$ that formulates a statistically valid early stopping rule. In turn, the contributions of this work are the following:

\begin{enumerate}[leftmargin=*,topsep=0pt]
    \item \textbf{A novel application for LTT}: we introduce, for the first time, methodologies that support ETSC algorithms with rigorous distribution-free, finite-sample risk-controlling guarantees.
    \item \textbf{Marginal risk control}: we present a flexible framework that allows predictive models to stop early the inference process while controlling the average accuracy gap.
    \item \textbf{Conditional risk control}: next, we introduce a novel algorithm for early stopping that controls the accuracy gap conditional on the accumulated halt times.
    \item \textbf{Theory precisely holds in practice}: we illustrate the effectiveness of our algorithms by applying them to diverse tasks. These include standard time series classification datasets and a novel application in natural language processing (NLP). Our methods controls the risk while saving up to 94\% of the timesteps available to make predictions. A software package implementing the proposed methods is publicly available on GitHub.\footnote{\url{https://github.com/liranringel/etc}}
\end{enumerate}

\section{Related Work}

There is active research in developing machine learning models for ETSC with stopping rules that aim to balance accuracy and early termination \cite{hartvigsen2019adaptive, gupta2020approaches, ghodrati2021frameexit, sabet2021temporal, tang2022temporal, hartvigsen2022stop, chen2022decoupled, shekhar2023benefit}.
While these tools are effective in practice, they often lack statistical assurance. Our proposal enriches this important line of research by introducing versatile tools, compatible with any state-of-the-art ETSC model, which rigorously control the accuracy gap, be it in a marginal or conditional sense.

Our proposal is closely related to calibrated predictive inference techniques, including conformal prediction, risk-controlling methods, and selective classification \cite{vovk2005algorithmic,papadopoulos2011reliable,lei2018distribution,tibshirani2019conformal,romano2020classification,bates2021distribution,angelopoulos2021gentle,gibbs2021adaptive,lin2022conformal,angelopoulos2022conformal,fisch2022calibrated,feldman2023achieving,lee2023t,cauchois2023robust,barber2023conformal}. Specifically, we expand the toolbox of risk-controlling tools, particularly when facing situations with high dimensional hyperparameter space.
The pioneering LTT work by \citet{angelopoulos2021learn} offers an approach to find a data-driven configuration of parameters that, for example, can be used to simultaneously control multiple risks. However, this approach can mostly handle \emph{low dimensional hyperparameter space} and becomes intractable when the search space is large. Recognizing this limitation, \citet{laufer2022efficiently,laufer2023risk} utilize Bayesian optimization tools to find Pareto optimal candidate configurations across various risks, which, in turn, improve the computational and statistical efficiency of LTT. This line of work shares similarities with the challenges we face in this paper; however, instead of utilizing a general purpose Bayesian optimization tool for parameter tuning, or using exhaustive search, we design a specialized procedure that builds upon the structure of the ETSC problem. Our proposal results in a computationally efficient technique to identify plausible configurations among the potentially enormous search space of $\underline{\lambda}$, that controls the accuracy gap with meaningful statistical power. 

Our approach is also aligned with recent efforts to design calibration methods that aim to reduce the computational complexity of LLMs \cite{schuster2021consistent,schuster2022confident}.
These methods involve formulating an early exit mechanism with, for example, marginal accuracy gap control.
A key difference between the above methods and our proposal is that we apply early exit over the time horizon rather than over the intermediate transformer layers. Furthermore, a crucial conceptual and technical difference is our transition from marginal to conditional guarantees, departing from the contributions mentioned above.

\section{Warm-up: Marginal Accuracy Gap Control}
\label{sec:marginal_risk_control}

To set the stage for our framework for conditional risk control, we start by presenting a method that achieves the modest marginal guarantee in \eqref{eq:marginal_guarantee}.
The development of this method also exposes the reader to the statistical principles of LTT.
Later, in Section \ref{sec:conditional_risk}, we will build on the foundations of the method presented here and introduce our main contribution---a methodology that attains the conditional guarantee of \eqref{eq:conditional_guarantee}.
To further simplify the exposition of the proposed marginal approach, consider tuning a single parameter $\hat{\lambda} \in [0,1] \cup \{\infty\}$ for all timesteps so that the stopping rule $\tau_{\hat{\lambda}}(X) = \min \{ t : \hat{\pi}(X^{\leq t}) \geq \hat{\lambda} \ \text{or} \ t=t_{\text{max}} \}$ achieves~\eqref{eq:marginal_guarantee}.

To start with, suppose we handed a candidate parameter $\lambda$, e.g., $\lambda=0.7$, and we are interested in testing whether it controls the accuracy gap. Following the LTT \cite{angelopoulos2021learn} approach, we define the null hypothesis induced by $\lambda$ as follows: 
\begin{equation}
  H_{0,\lambda}: R_{\text{gap}}^{\text{marginal}}(\tau_{\lambda}) > \alpha.
\end{equation}
That is, if the null is false, our candidate $\lambda$ controls the marginal accuracy gap. With this in place, we formulate a statistical test that utilizes the observed labeled data---the calibration set---to decide whether we can reject $H_{0,\lambda}$ and report that $\lambda$ is likely to control the risk or accept $H_{0,\lambda}$ if there is not enough evidence to reject the null. To formulate such a test, we compute a \emph{p-value} $p_\lambda$, where a valid p-value satisfies the following property under the null:
\begin{equation}
  \label{eq:p_value_property}
  \mathbb{P}_{\mathcal{D}_{\text{cal}}} \left ( p_\lambda \leq u \mid H_{0,\lambda} \right ) \leq u, \ \ u \in [0,1].
\end{equation}
In plain words, if $H_{0,\lambda}$ is true, the p-value is stochastically greater than or equal to uniform distribution on $[0,1]$.
Hence, \emph{considering a single hypothesis}, when observing $p_\lambda \leq \delta$ we can safely reject $H_{0,\lambda}$, knowing that the probability of falsely rejecting the null (type I error) is at most $\delta$. 

To compute such a p-value, we leverage the fact that the loss $L_\text{gap}$ is binary, and thus we can employ the exact tail bound from \citet{bates2021distribution} (Appendix B); see also \citet{brown2001interval}.
In more detail, denote the cumulative distribution function of the binomial distribution by $\text{CDF}_{\text{bin}}(\hat{k};n,\alpha)$ where $\hat{k}$ is the number of successes, $n$ is the number of independent Bernoulli trials, and $\alpha$ is the probability of success. Thus, in our case, the p-value is $\hat{p}_\lambda = \text{CDF}_{\text{bin}}\left(n\hat{R}_{ \text{gap}}(\tau_\lambda); n, \alpha\right)$, where $\hat{R}_{\text{gap}}(\tau_\lambda)$ is the empirical accuracy gap obtained by the stopping rule $\tau_\lambda$, evaluated on $n=|\mathcal{D}_{\text{cal}}|$ i.i.d. samples. Put simply, this formula transforms the empirical risk, evaluated on the calibration set $\mathcal{D}_{\text{cal}}$, into a p-value that satisfies~\eqref{eq:p_value_property}.

Thus far we have discussed the problem of testing for a single hypothesis, i.e., testing whether a specific candidate $\lambda$ does not control the accuracy gap. However, naturally, the task of finding $\hat{\lambda}$ that promotes early stopping while controlling the risk involves \emph{testing for multiple hypotheses}: each hypothesis $H_{0,\lambda_i}$ corresponds to a different $\lambda_i \in \Lambda, \ 1 \leq i \leq |\Lambda|$, where $\Lambda = \{ 0, \Delta, 2\Delta, \dots, 1 \} = \{ \lambda_1, \lambda_2, \dots, \lambda_{|\Lambda|}\}$ is a discretized grid of possible values and $\Delta \in (0, 1)$ defines the resolution of the grid.

The challenge that arises is that we must test all hypotheses simultaneously. To clarify, a naive rejection rule $p_{\lambda_i} \leq \delta$ can lead to a high probability that some of the true null hypotheses are rejected by chance alone, and this probability increases with the number of true nulls that are tested \cite{miller2012simultaneous}.
To tackle this issue, we follow \citet{angelopoulos2021learn} and formulate a multiple testing procedure that controls the \emph{family-wise error rate} (FWER). Formally, let $V$ be the number of true nulls that are falsely rejected by the testing procedure, and define  $\text{FWER} = \mathbb{P}\left( V \geq 1 \right)$ 
as the probability of falsely rejecting at least one true null hypothesis.
Therefore, to control \eqref{eq:marginal_guarantee}, we should design a testing procedure that ensures the FWER does not exceed $\delta$.

To rigorously control the FWER, we adopt the \emph{fixed sequence testing} procedure \cite{bauer1991multiple} used in LTT, as follows.
First, we order the hypotheses from most plausible to least \emph{without looking at the calibration data}.
In our context, higher thresholds are more likely to control the risk in~\eqref{eq:marginal_guarantee}, and therefore we order the hypotheses from the largest $\lambda_{|\Lambda|}$ to the smallest $\lambda_{1}$. Then, we arrange the p-values according to this ordering and sequentially compare each p-value to the desired level $\delta$. This sequential testing procedure terminates the first time $j$ that $p_{\lambda_j} > \delta$, resulting in a set of valid thresholds $\mathcal{R} = \{\lambda_i : i > j\}\cup\{\infty\}$.
Importantly, any threshold in the set $\mathcal{R}$ is guaranteed to control~\eqref{eq:marginal_guarantee}, including the trivial choice for which $\lambda=\infty$. (When $\lambda=\infty$ the model will never stop early and thus trivially achieves zero accuracy gap.)  Since our goal is to formulate a rule that stops as early as possible, we set the final $\hat{\lambda}$ to be the smallest $\lambda$ among the rejected ones, i.e., $\hat{\lambda}=\lambda_{j+1}$, or $\hat{\lambda} = \infty$ if $p_{\lambda_{|\Lambda|}} > \delta$.
For ease of reference, this procedure is summarized in Algorithm~\ref{alg:marginal_risk_control}, presented in Appendix~\ref{app:marginal_algo}, and its validity is a direct consequence of using fixed sequence testing to control the FWER at level $\delta$.
\begin{proposition}
  \label{prop:marginal_risk_control}
  Assuming the calibration and test samples are i.i.d., with $\hat{\lambda}$ selected as outlined in Algorithm~\ref{alg:marginal_risk_control}, the stopping rule $\tau_{\hat{\lambda}}(X)$ satisfies \eqref{eq:marginal_guarantee}. 
\end{proposition}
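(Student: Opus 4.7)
The plan is to decompose the argument into two well-separated pieces: validity of the per-hypothesis p-values, and validity of the fixed sequence testing wrapper that turns them into an FWER-controlling multiple testing procedure. The conclusion then follows because FWER control at level $\delta$ means that, except on a bad event of probability at most $\delta$, every $\lambda$ that gets rejected by the procedure truly satisfies $R_{\text{gap}}^{\text{marginal}}(\tau_\lambda)\leq\alpha$; since $\hat{\lambda}$ is chosen from the rejection set $\mathcal{R}$ (or equals $\infty$, which trivially controls the risk), we are done.

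First I would verify that $\hat{p}_\lambda=\mathrm{CDF}_{\mathrm{bin}}(n\hat{R}_{\mathrm{gap}}(\tau_\lambda);n,\alpha)$ is a valid p-value for $H_{0,\lambda}$. Because the calibration samples are i.i.d.\ and $L_{\mathrm{gap}}\in\{0,1\}$, the count $n\hat{R}_{\mathrm{gap}}(\tau_\lambda)$ is $\mathrm{Binomial}(n,R_{\mathrm{gap}}^{\mathrm{marginal}}(\tau_\lambda))$. Under $H_{0,\lambda}$ we have $R_{\mathrm{gap}}^{\mathrm{marginal}}(\tau_\lambda)>\alpha$, so the observed count stochastically dominates a $\mathrm{Binomial}(n,\alpha)$ variable, and therefore $\mathrm{CDF}_{\mathrm{bin}}(\cdot;n,\alpha)$ evaluated at it is stochastically larger than $\mathrm{Uniform}[0,1]$, which is exactly \eqref{eq:p_value_property}. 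This is the standard binomial exact tail construction cited from \citet{bates2021distribution,brown2001interval}.

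Next I would argue FWER control via fixed sequence testing. The ordering of $\Lambda$ from $\lambda_{|\Lambda|}$ down to $\lambda_1$ is fixed \emph{a priori} (it depends only on $\Lambda$, not on $\mathcal{D}_{\mathrm{cal}}$), and the procedure rejects a prefix of hypotheses, stopping at the first index $j$ with $\hat{p}_{\lambda_j}>\delta$. Let $j^\star$ be the smallest index in the testing order whose null $H_{0,\lambda_{j^\star}}$ is true; any rejection before $j^\star$ is a rejection of a false null and is by definition not a type I error. A type I error therefore occurs only if the procedure reaches and rejects $H_{0,\lambda_{j^\star}}$, which requires $\hat{p}_{\lambda_{j^\star}}\leq\delta$. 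By the p-value validity from the previous step, this event has probability at most $\delta$. Hence $\mathrm{FWER}\leq\delta$.

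Finally I would translate FWER control into the claim. On the complement of the bad event (probability $\geq 1-\delta$), every $\lambda\in\mathcal{R}\setminus\{\infty\}$ satisfies $R_{\mathrm{gap}}^{\mathrm{marginal}}(\tau_\lambda)\leq\alpha$, while $\lambda=\infty$ gives $\tau_\infty\equiv t_{\max}$ and $\hat{Y}^{\mathrm{early}}=\hat{Y}^{\mathrm{full}}$, so the loss in \eqref{eq:accuracy_gap} is identically zero. By the construction in Algorithm~\ref{alg:marginal_risk_control}, $\hat{\lambda}\in\mathcal{R}$, so $R_{\mathrm{gap}}^{\mathrm{marginal}}(\tau_{\hat{\lambda}})\leq\alpha$ on this event, yielding \eqref{eq:marginal_guarantee}. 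There is no real obstacle here beyond being careful about two small points: (i) the binomial bound uses the boundary value $\alpha$ and stochastic dominance handles any interior null, and (ii) $\hat{\lambda}$ must always belong to $\mathcal{R}$, which holds by definition of the procedure (including the fallback $\hat{\lambda}=\infty$ when $\hat{p}_{\lambda_{|\Lambda|}}>\delta$).
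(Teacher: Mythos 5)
Your proof is correct and follows essentially the same route as the paper: fixed sequence testing over the a priori ordering controls the FWER because a type~I error requires the first true null's p-value to fall below $\delta$, and validity then transfers to $\hat{\lambda}\in\mathcal{R}$. The only difference is that you spell out the binomial stochastic-dominance argument for p-value validity and the final translation to \eqref{eq:marginal_guarantee}, both of which the paper leaves to citations or treats as immediate.
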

All proofs are presented in Appendix~\ref{app:proofs}.
In plain words, the above proposition implies that Algorithm~\ref{alg:marginal_risk_control} formulates a stopping rule that achieves marginal accuracy gap control given a finite calibration set, no matter what the data distribution is, and regardless of the choice of the ``black-box'' classifier.
While Proposition~\ref{prop:marginal_risk_control} is appealing, the usefulness of the marginal guarantee in real-world scenarios may be limited, as discussed and demonstrated in Section~\ref{sec:teaser}. This limitation prompts our exploration in the next section.

\section{Conditional Accuracy Gap Control}
\label{sec:conditional_risk}
We now turn to present the focal point of this work: a framework designed to control the conditional accuracy gap~\eqref{eq:conditional_guarantee}. Beyond the transition from marginal to conditional guarantee, in this section we utilize a more general formulation of the stopping rule, in which $\hat{\tau}(X) = \tau_{\underline{\hat{\lambda}}}(X) = \min \{ t : \hat{\pi}(X^{\leq t}) \geq \underline{\hat{\lambda}}_t \}$ with $\underline{\hat{\lambda}} = (\underline{\hat{\lambda}}_1, \underline{\hat{\lambda}}_2, \dots, \underline{\hat{\lambda}}_{t_{\text{max}}})$. This choice adds additional flexibility to the proposed framework compared to tuning a single parameter (as in Section~\ref{sec:marginal_risk_control}), allowing us to formulate more effective early stopping rules.

Analogously to Section~\ref{sec:marginal_risk_control}, we will adopt the fixed sequence testing procedure to construct a rejection set $\mathcal{R}$ that contains the configurations of $\underline{\lambda}$ that control the conditional risk. In the view of multiple testing, now each null hypotheses is formulated as 
\begin{equation}
\label{eq:conditional_null_joint}
  H_{0,\underline{\lambda}}: R_{\text{gap}}^{\leq t}(\tau_{\underline{\lambda}}) > \alpha \ \ \text{for at least one } t \geq t_0,
\end{equation}
where $t_0$ is the first timestep at which the probability for an early stopping event is not zero, i.e., $P(\tau_{\underline{\lambda}}(X) \leq t_0) > 0$.

In striking contrast to Section~\ref{sec:marginal_risk_control}, the formulation of a FWER-controlling procedure in this case is far more challenging due to the following.
\begin{enumerate}[leftmargin=*,topsep=0pt]
    \item There are $(|\Lambda|+1)^{t_\text{max}}$ possible configurations for $\underline{\lambda}$ and thus it is infeasible to sweep over this exponential number of hypotheses. Given this sheer volume, computing a p-value for each hypothesis exceeds reasonable computational limits.
    \item To achieve good statistical power with fixed sequence testing, careful ordering of hypotheses is essential: inadequate ordering may lead to a rejection set $\mathcal{R}$ that includes less effective threshold vectors. As discussed in Section~\ref{sec:marginal_risk_control}, there is a natural ordering of the hypotheses when considering the tuning of a \emph{single} threshold; we can simply order the hypotheses from the largest $\lambda$ to the smallest one. However, it is unclear how to order the hypotheses when working with a vector $\underline{\lambda}$.
    \item When faced with a small sample size, the p-value may be too high even if the risk is lower than $\alpha$. This is attributed to the fact that the p-value produced by $\text{CDF}_{\text{bin}}$ takes into account the number of samples used to calculate the empirical risk. Importantly, this is not an abstract concern; in practice, as we strive for conditional risk control, situations with a small sample size become prevalent, particularly for the very early timesteps.
\end{enumerate}

$\\$

In what follows, we present a method that alleviates these issues, taking inspiration from the principle of \emph{split fixed sequence testing} proposed in LTT. In this approach, we first split the calibration set $\mathcal{D}_{\text{cal}}$ into two disjoint sets: $\mathcal{D}_{\text{cal-1}}$ and $\mathcal{D}_{\text{cal-2}}$. Then, we proceed with a two-stage algorithm, described below at a high level.
\paragraph{\textbf{Stage 1: Candidate Screening}:} Use $\mathcal{D}_{\text{cal-1}}$ to heuristically find a data-adaptive threshold vector $\hat{\underline{\eta}}$, with an eye towards early stopping with conditional risk control.
\paragraph{\textbf{Stage 2: Testing}:} Apply fixed sequence testing to configurations derived from $\hat{\underline{\eta}}$. Here, we use the independent holdout set $\mathcal{D}_{\text{cal-2}}$ to ensure the validity of the test.

\subsection{Stage 1: Candidate Screening}
\label{sec:candidate_screening}

\begin{algorithm}[ht]
  \caption{Candidate Screening (Stage 1)}
  \label{alg:candidate_screening}
  \begin{algorithmic}[1]
  \STATE \textbf{Input}: Calibration set $\mathcal{D}_{\text{cal-1}} = \{ (X_i, Y_i) \}_{i=1}^{n_{\text{cal-1}}}$, tolerable accuracy gap $\alpha$, grid resolution $\Delta$.
  \STATE \( \hat{\underline{\eta}} \leftarrow \{\infty, \ldots, \infty\} \)
  \STATE // Find $\hat{\underline{\eta}}_t$ greedily during the $t$-th iteration.
  \FOR{\( t = 1, \ldots, t_{\text{max}} \)}
    \STATE \( \underline{\eta} \leftarrow \underline{\hat{\eta}} \)
    \STATE // Find the lowest $\underline{\eta}_t \in \Lambda$ s.t. $\hat{R}_{\text{gap}}^{\leq t} \leq \alpha$.
    \FOR{\( \xi = 0, \Delta, 2\Delta, \ldots, 1 \)}
      \STATE \( \underline{\eta}_t \leftarrow \xi \)
      \STATE \( I \leftarrow \{i : \tau_{\underline{\eta}}(X_i) \leq t \} \) \ // Find samples with a halt time $\leq t$.
      \IF{\( I = \emptyset \)}
        \STATE // Cannot calculate the empirical risk.
        \STATE Break inner loop and set $\hat{\underline{\eta}}_t = \infty$
      \ENDIF
      \STATE \( \hat{R}_{\text{gap}}^{\leq t} \leftarrow \frac{1}{|I|}\sum_{i \in I} L_{\text{gap}}(Y_i,\hat{Y_i}^{\text{full}},\hat{Y_i}^{\text{early}}(\tau_{\underline{\eta}})) \) \ // Calculate the empirical risk. \alglinelabel{alg:candidate_screening:calc_r_hat}
      \IF{\( \hat{R}_{\text{gap}}^{\leq t} \leq \alpha \)}
        \STATE // Found the lowest $\underline{\eta}_t$ s.t. $\hat{R}_{\text{gap}}^{\leq t} \leq \alpha$.
        \STATE Break inner loop and set  $\hat{\underline{\eta}}_t \leftarrow \xi$ 
      \ENDIF
    \ENDFOR
  \ENDFOR
  \STATE \textbf{Output}: $\hat{\underline{\eta}}$
  \end{algorithmic}
\end{algorithm}

\noindent
We present a greedy algorithm that takes as input a predictive model and calibration data $\mathcal{D}_{\text{cal-1}}$ and returns a candidate threshold vector $\hat{\underline{\eta}}$. This procedure, summarized in Algorithm~\ref{alg:candidate_screening}, sequentially updates the elements in the vector $\underline{\hat{\eta}}$ as follows. It starts by updating the first element $\hat{\underline{\eta}}_1$ that corresponds to the timestep $t=1$, then proceeds to $\hat{\underline{\eta}}_2$ for $t=2$, and continues until reaching $\hat{\underline{\eta}}_{\text{max}}$ at $t = t_{\text{max}}$. Specifically, at timestep $t$, we are handed the vector $\underline{\hat{\eta}} = (\hat{\underline{\eta}}_1, \ldots, \hat{\underline{\eta}}_{t-1}, \infty,\ldots,\infty)$, and set its $t$-th element $\hat{\underline{\eta}}_t$ to be the smallest ${\underline{\eta}}_t$ such that $\hat{R}_{\text{gap}}^{\leq t}(\tau_{\hat{\underline{\eta}}}) \leq \alpha$, or keep $\hat{\underline{\eta}}_t=\infty$ if there is no $\underline{\eta}_t$ that satisfies this constraint.
Above, $\hat{R}_{\text{gap}}^{\leq t}(\tau_{\hat{\underline{\eta}}})$ is the empirical accuracy gap of the samples with halt time that is less than or equal to $t$ (see line \ref{alg:candidate_screening:calc_r_hat}).

Before moving to the next stage, we pause to discuss the properties of this greedy method. First, the computational complexity of the proposed algorithm is $\mathcal{O}(t_{\text{max}}\cdot |\Lambda| \cdot |\mathcal{D}_{\text{cal-1}}|)$, which is attributed to the fact that we choose to sequentially update the vector $\underline{\hat{\eta}}$.
Second, by design, the choice of $\hat{\underline{\eta}}_{t'}$ for $t'>t$ does not affect $\hat{R}_{\text{gap}}^{\leq t'}$ for $t'\leq t$.
Third, this greedy method seeks a vector $\underline{\hat{\eta}}$ that yields a stopping rule whose empirical conditional risk is tightly regulated around $\alpha$, but not exceeded. This property is crucial to attaining an effective early stopping rule. In principle, instead of determining $\hat{\underline{\eta}}_t$ solely based on empirical risk, we could choose the smallest $\underline{\eta}_t$ whose p-value falls below $\delta$, an approach that is akin to the split fixed sequence testing idea of \citet{angelopoulos2021learn}. However, we decided to work directly with the empirical risk, as it is arguably straightforward to implement, and we found these two approaches to have similar halt times. In any case, while sensible, the process of finding the vector $\underline{\hat{\eta}}$ is heuristic in the sense that it is not guaranteed to control the conditional risk for future test points. This issue naturally leads us to the next stage.

\subsection{Stage 2: Testing}
\label{sec:valid_stage}

\begin{algorithm}[t]
\caption{Testing (Stage 2)}
\label{alg:valid_stage}
\begin{algorithmic}[1]
\STATE \textbf{Input}: Calibration set $\mathcal{D}_{\text{cal-2}} = \{ (X_i, Y_i) \}_{i=1}^{n_{\text{cal-2}}}$, candidate thresholds $\hat{\underline{\eta}}$, tolerable accuracy gap $\alpha$, significance level $\delta$, grid resolution $\Delta$.
\STATE // Start with the most conservative stopping rule.
\STATE \( \hat{\underline{\lambda}} \leftarrow \{\infty, \ldots, \infty\} \)
\STATE // Gradually reveal another $\hat{\underline{\eta}}_t$ from the end and test it.
\FOR{\( t = t_{\text{max}}, \ldots, 1 \)}
  \STATE \( \underline{\lambda} \leftarrow \hat{\underline{\lambda}} \)
  \STATE \( \underline{\lambda}_t \leftarrow \hat{\underline{\eta}}_t \) \ // Set $\underline{\lambda}$ to $\underline{\lambda}^t$.
  \STATE // Test $H_{0,\underline{\lambda}^t}^{t'}$ for all $t'\geq t$.
  \FOR{\( t' = t, \ldots, t_{\text{max}} \)}
    \STATE \( I \leftarrow \{i : \tau_{\underline{\lambda}}(X_i) \leq t' \} \) \ // Find samples with a halt time $\leq t'$.
    \IF{\( I = \emptyset \)}
        \STATE // No evidence to reject the null, stop testing.
        \STATE Break both loops
    \ENDIF
    \STATE \( \hat{R}_{\text{gap}}^{\leq t'} \leftarrow \frac{1}{|I|}\sum_{i \in I} L_{\text{gap}}(Y_i,\hat{Y_i}^{\text{full}},\hat{Y_i}^{\text{early}}(\tau_{\underline{\lambda}})) \) \ // Calculate the empirical risk.
    \STATE \( \hat{p}_{\underline{\lambda}^t}^{t'} \leftarrow \text{CDF}_{\text{bin}}\left(\hat{R}_{\text{gap}}^{\leq t'} \cdot |I|; |I|, \alpha\right) \) \ // Compute a p-value.
    \IF{\( \hat{p}_{\underline{\lambda}^t}^{t'} > \delta \)}
      \STATE // Failed to reject the null, stop testing.
      \STATE Break both loops
    \ENDIF
  \ENDFOR
  \STATE \( \hat{\underline{\lambda}} \leftarrow \underline{\lambda} \) \hspace{0.1cm} // $H_{0,\underline{\lambda}^t}$ was rejected, update the chosen \( \hat{\underline{\lambda}} \).
\ENDFOR
\STATE \textbf{Output}: \( \hat{\underline{\lambda}} \)
\end{algorithmic}
\end{algorithm}

\noindent
In this testing stage, we build on the candidate vector $\underline{\hat{\eta}}$ to form a statistically valid stopping rule that attains \eqref{eq:conditional_guarantee}. A naive and optimistic approach would be to test for a single null $H_{0,\underline{\lambda}}$ defined in \eqref{eq:conditional_null_joint} for the choice $\underline{\lambda} = \underline{\hat{\eta}}$. Rejection of this null hypothesis with a significance level of $\delta$ implies that $\underline{\hat{\eta}}$ attains \eqref{eq:conditional_guarantee}, achieving a powerful stopping rule due to the design of $\underline{\hat{\eta}}$. However, if we fail to reject this null, our fallback is the trivial configuration $\underline{\hat{\lambda}} = (\infty, \ldots, \infty)$ that results in a conditional accuracy gap of zero. However, in this case, the stopping rule we form is the most conservative one, as the model will never stop early.

To alleviate this, we employ fixed sequence testing, designed to yield an effective stopping rule with FWER-control, even in cases where the null hypothesis $H_{0,\underline{\lambda}}$ with the ``optimistic'' configuration $\underline{\lambda}=\underline{\hat{\eta}}$ would not be rejected. Recall the underlying principle of fixed sequence testing: order the hypotheses from the most plausible to the least, without looking at the holdout data $\mathcal{D}_{\text{cal-2}}$. Building on the structure of the ETSC problem, we define the sequence of configurations
$\underline{\lambda}^{t_{\text{max}}} = (\infty, \ldots, \infty, \hat{\eta}_{t_{\text{max}}})$, $\underline{\lambda}^{t_{\text{max}}-1} = (\infty, \ldots, \infty,\hat{\eta}_{t_{\text{max}}-1}, \hat{\eta}_{t_{\text{max}}})$, all the way to $\underline{\lambda}^{1} = (\hat{\eta}_{1}, \hat{\eta}_{2}, \ldots \hat{\eta}_{t_{\text{max}}})$. That is, the $t'$-th element in the vector $\underline{\lambda}^{t}$ is $ \underline{\lambda}^t_{t'} = \hat{\eta}_{t'} \ \text{if } t' \geq t, $ and $ \underline{\lambda}^t_{t'} = \infty$ otherwise.
Importantly, the stopping rule $\tau_{\underline{\lambda}^t}$ does not allow stopping the classification process at timesteps smaller than $t$. With this construction in place, we suggest applying the fixed sequence testing procedure to the hypotheses ordered from the one induced by $\underline{\lambda}^{t_{\text{max}}}$, i.e., $H_{0,\underline{\lambda}^{t_{\text{max}}}}$ down to the one corresponding to $\underline{\lambda}^{1}$, i.e., $H_{0,\underline{\lambda}^{1}}$. Note that this ordering is particularly powerful when the accuracy gap of the model tends to decrease with the number of timesteps observed---a sensible characteristic in ETSC. Additionally, the suggested ordering enables us to postpone the testing of hypotheses involving limited sample sizes to later stages of the procedure, which is attractive as it is more likely that we will fail to reject those nulls.

Having defined the ordering of the hypotheses, we turn to describe how to compute a valid p-value for each of the individual hypotheses, using the holdout data $\mathcal{D}_{\text{cal-2}}$. Consider the hypothesis $H_{0,\underline{\lambda}}$ in \eqref{eq:conditional_null_joint} for the choice $\underline{\lambda}=\underline{\lambda}^{t}$, and define its finer null hypotheses as follows:
\begin{equation}
\label{eq:conditional_null_t}
H_{0,\underline{\lambda}^{t}}^{t'} : R_{\text{gap}}^{\leq t'}(\tau_{\underline{\lambda}^{t}}) > \alpha \ \ \text{for} \ \ t'=t,\dots,t_\text{max}.
\end{equation}
Observe that $H_{0,\underline{\lambda}^{t}}$ in \eqref{eq:conditional_null_joint} is true if and only if there exists $t' \geq t$ such that $H_{0,\underline{\lambda}^{t}}^{t'}$ is true. Observe also that, by construction, $\tau_{\underline{\lambda}^t}$ cannot stop at timesteps smaller than $t$, and thus $t_0$ in \eqref{eq:conditional_null_joint} satisfies $t_0 \geq t$.
Importantly, the formulation of the finer nulls in \eqref{eq:conditional_null_t} paves the way to test the individual hypothesis $H_{0,\underline{\lambda}^t}$. Specifically, it implies that we can reject the individual hypothesis $H_{0,\underline{\lambda}^t}$ if all the finer hypotheses $H_{0,\underline{\lambda}^t}^{t'}, \  t'\geq t$ are rejected. This amounts to computing a p-value $\hat{p}_{\underline{\lambda}^t}^{t'}$ for each finer hypothesis $H_{0,\underline{\lambda}^t}^{t'}$ and rejecting $H_{0,\underline{\lambda}^t}$ if $\hat{p}_{\underline{\lambda}^t}^{t'} \leq \delta$ for all $t'\geq t$. Put simply, we reject $H_{0,\underline{\lambda}^t}$ if $\hat{p}_{\underline{\lambda}^t} = \max \{\hat{p}_{\underline{\lambda}^t}^{t'} : {t' \geq t} \} \leq \delta$.

Algorithm~\ref{alg:valid_stage} summarizes the proposed testing procedure. The outer loop in this algorithm sequentially iterates over the hypotheses $H_{0,\underline{\lambda}^t}$ from $\underline{\lambda}^{t_\text{max}}$ to $\underline{\lambda}^{1}$. The inner loop tests the null $H_{0,\underline{\lambda}^t}$ under study by breaking it into the finer hypotheses $H_{0,\underline{\lambda}^t}^{t'}, \  t'\geq t$. This algorithm returns the configuration $\underline{\hat{\lambda}}={\underline{\lambda}}^t$ corresponding to the smallest $t$ in which $H_{0,\underline{\lambda}^t}$ was rejected. The complexity of Algorithm~\eqref{alg:valid_stage} is $\mathcal{O}(t_\text{max}^2 \cdot |\mathcal{D}_{\text{cal-2}} |)$, and the validity of the resulting stopping rule $\tau_{\underline{\hat{\lambda}}}$ is as follows.

\begin{proposition}
  \label{prop:conditional_risk_control}
  Assuming the calibration and test samples are i.i.d., with $\hat{\underline{\lambda}}$ selected as outlined in Algorithm~\ref{alg:valid_stage}, the stopping rule $\tau_{\hat{\underline{\lambda}}}(X)$ satisfies \eqref{eq:conditional_guarantee}.
\end{proposition}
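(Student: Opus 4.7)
The plan is to reduce Proposition~\ref{prop:conditional_risk_control} to a classical fixed-sequence testing argument applied to $\mathcal{D}_{\text{cal-2}}$, leveraging the independence between the two calibration splits. As a first step, I would condition on $\mathcal{D}_{\text{cal-1}}$ and treat the candidate vector $\hat{\underline{\eta}}$ produced by Algorithm~\ref{alg:candidate_screening} as fixed. Because $\mathcal{D}_{\text{cal-2}}$ is i.i.d.\ from $P_{XY}$ and independent of $\mathcal{D}_{\text{cal-1}}$, the ordered sequence of configurations $\underline{\lambda}^{t_{\max}},\underline{\lambda}^{t_{\max}-1},\dots,\underline{\lambda}^{1}$ and their associated nulls $H_{0,\underline{\lambda}^{t}}$ are prespecified relative to the testing sample. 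Hence, if the resulting guarantee holds uniformly over all realizations of $\hat{\underline{\eta}}$, marginalizing over $\mathcal{D}_{\text{cal-1}}$ preserves the $1-\delta$ bound required by \eqref{eq:conditional_guarantee}.

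Next, I would verify that each $\hat{p}_{\underline{\lambda}^t}^{t'}$ is a valid p-value for the finer null $H_{0,\underline{\lambda}^t}^{t'}$, and then aggregate to a valid test of the composite null $H_{0,\underline{\lambda}^t}$. Conditional on the index set $I=\{i:\tau_{\underline{\lambda}^t}(X_i)\leq t'\}$ with $|I|=m\geq 1$, the binary losses $L_{\text{gap}}(Y_i,\hat{Y}_i^{\text{full}},\hat{Y}_i^{\text{early}}(\tau_{\underline{\lambda}^t}))$ for $i\in I$ are i.i.d.\ Bernoulli with parameter $R_{\text{gap}}^{\leq t'}(\tau_{\underline{\lambda}^t})$, so $|I|\,\hat{R}_{\text{gap}}^{\leq t'}\sim\mathrm{Binom}\bigl(m,\,R_{\text{gap}}^{\leq t'}(\tau_{\underline{\lambda}^t})\bigr)$. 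Under the null, this success probability exceeds $\alpha$, and monotonicity of the binomial CDF in the success probability (the tail bound from \cite{bates2021distribution,brown2001interval}) yields $\mathbb{P}(\hat{p}_{\underline{\lambda}^t}^{t'}\leq u \mid |I|=m) \leq u$; marginalizing over $|I|$ recovers \eqref{eq:p_value_property}, and the edge case $m=0$ is harmless since Algorithm~\ref{alg:valid_stage} then refrains from rejecting. To upgrade to a test of $H_{0,\underline{\lambda}^t}$, observe that if $H_{0,\underline{\lambda}^t}$ holds then $H_{0,\underline{\lambda}^t}^{t^*}$ holds for some $t^*\geq t_0\geq t$, so $\hat{p}_{\underline{\lambda}^t}:=\max_{t'\geq t}\hat{p}_{\underline{\lambda}^t}^{t'}\geq \hat{p}_{\underline{\lambda}^t}^{t^*}$ implies $\mathbb{P}(\hat{p}_{\underline{\lambda}^t}\leq\delta)\leq\delta$; Algorithm~\ref{alg:valid_stage} rejects $H_{0,\underline{\lambda}^t}$ exactly on the event $\{\hat{p}_{\underline{\lambda}^t}\leq\delta\}$, so each individual test is valid at level $\delta$.

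With validity in hand, I would close the argument by invoking the FWER guarantee of fixed-sequence testing \cite{bauer1991multiple}: testing the prespecified sequence $H_{0,\underline{\lambda}^{t_{\max}}},\dots,H_{0,\underline{\lambda}^{1}}$ at level $\delta$ each and halting at the first non-rejection bounds the probability of any false rejection by $\delta$. Let $\hat{t}$ denote the smallest $t$ at which rejection occurs; if no rejection ever occurs, the output $\hat{\underline{\lambda}}=(\infty,\dots,\infty)$ satisfies \eqref{eq:conditional_guarantee} trivially since then $\tau_{\hat{\underline{\lambda}}}\equiv t_{\max}$, $t_0=t_{\max}$, and the accuracy gap vanishes. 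On the complementary event of no false rejection, the rejection of $H_{0,\underline{\lambda}^{\hat{t}}}$ is correct, so $R_{\text{gap}}^{\leq t'}(\tau_{\hat{\underline{\lambda}}})\leq\alpha$ for every $t'\geq t_0$; moreover $t_0\geq \hat{t}$ by construction since $\tau_{\underline{\lambda}^{\hat{t}}}$ cannot halt before time $\hat{t}$. This is exactly the event inside \eqref{eq:conditional_guarantee}, and it has conditional probability at least $1-\delta$ given $\mathcal{D}_{\text{cal-1}}$, hence unconditional probability at least $1-\delta$ as well. The main delicate point I anticipate is step two: validating $\hat{p}_{\underline{\lambda}^t}^{t'}$ as a p-value requires careful conditioning on the data-dependent random set $I$ to expose an exact binomial distribution, and the max-based aggregation must correctly account for the composite nature of $H_{0,\underline{\lambda}^t}$ together with the edge case $|I|=0$.
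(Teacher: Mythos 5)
Your proposal is correct and follows essentially the same route as the paper's proof: fixed sequence testing over the prespecified ordering $\underline{\lambda}^{t_{\max}},\dots,\underline{\lambda}^{1}$, validity of each finer p-value via the exact binomial distribution of the losses conditional on the halt-time event, and max-p-value aggregation for the composite nulls. You simply spell out the details the paper leaves implicit (conditioning on $\mathcal{D}_{\text{cal-1}}$, the $|I|=0$ edge case, and the no-rejection fallback), all of which are handled correctly.
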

Similarly to Proposition~\ref{prop:marginal_risk_control}, the above result states that Algorithm~\eqref{alg:valid_stage} achieves a finite-sample, distribution-free risk control. But, in contrast with Proposition~\ref{prop:marginal_risk_control}, here we control a stronger notion of error---the conditional accuracy gap.

\section{Experiments}
\label{sec:experiments}
In this section, we evaluate the proposed methods both on structured time series datasets that are widely used in the ETSC literature and on the multiple-choice answering task, which was introduced in Section~\ref{sec:teaser}. 
The performance metrics include the conditional $R_{\text{gap}}^{\leq t}(\hat{\tau})$ and marginal $R_{\text{gap}}^{\text{marginal}}(\hat{\tau})=R_{\text{gap}}^{\leq t_\text{max}}(\hat{\tau})$ accuracy gap, evaluated on unseen test data $\mathcal{D}_{\text{test}}$. We also report the gain in early stopping, defined as the average normalized halt time:
$T_{\text{avg}} = \frac{1}{|\mathcal{D}_{\text{test}}|}\sum_{X_i\in\mathcal{D}_{\text{test}}} \frac{\hat{\tau}(X_i)}{t_\text{max}}.$ In all experiments, we set the target accuracy gap level to $\alpha = 10\%$, with $\delta=1\%$ and $\Delta=0.01$.
Throughout this section, the marginal method can be thought of as a baseline, as it closely resembles the calibration procedure suggested by \citet{schuster2022confident} to control the accuracy gap for early exit in transformers.

\subsection{Application to Structured Data}
\label{sec:application_to_structured_data}

\definecolor{grayish}{rgb}{0.95, 0.95, 0.95}
\begin{table}[ht]
\caption{\textbf{Summary of performance metrics for the proposed marginal and conditional methods across all structured datasets}. Results are presented for a nominal accuracy gap of $\alpha=10\%$ and $\delta=1\%$. The table provides the accumulated accuracy gap over the 20\% and 50\% earliest stopping times determined by $\hat{\tau}$ for each method, along with the marginal accuracy gap. The rightmost column presents the average normalized stopping time. All performance metrics are averaged over 100 random calibration/test splits. All standard errors are less than 0.008 and thus omitted.}
\label{tab:results}
\centering
\scalebox{0.9}{
\begin{tabular}{lclcccccc}
\toprule
\multirow{2}{*}{\textbf{Dataset}} & \multirow{2}{*}{\textbf{Late Acc.}} & \multirow{2}{*}{\textbf{Method}} & \multirow{2}{*}{\textbf{Early Acc.}} & \multicolumn{3}{c}{\textbf{Acc. Gap for Earliest $\hat{\tau}(X)$}} & \multirow{2}{*}{$T_{\text{avg}}$} \\ \cline{5-7}
& & & & 20\% earliest & 50\% earliest & Marginal & \\
\midrule
& & Marginal & 0.757 & 0.081 & 0.084 & 0.093 & 0.209 \\
& & Conditional & 0.771 & 0.064 & 0.065 & 0.085 & 0.215 \\
\rowcolor{grayish}
\multirow{-4}{*}{\texttt{Tiselac}} & \multirow{-4}{*}{0.816}
& Marginal & 0.809 & \color{red}0.117 & \color{red}0.108 & 0.079 & 0.471 \\
\rowcolor{grayish}
& & Conditional & 0.825 & 0.030 & 0.031 & 0.075 & 0.552 \\
\multirow{-4}{*}{\texttt{ElectricDevices}} & \multirow{-4}{*}{0.873}
& Marginal & 0.912 & 0.086 & 0.090 & 0.080 & 0.446 \\
& & Conditional & 0.940 & 0.049 & 0.050 & 0.051 & 0.567 \\
\rowcolor{grayish}
\multirow{-4}{*}{\texttt{PenDigits}} & \multirow{-4}{*}{0.989}
& Marginal & 0.608 & \color{red}0.171 & \color{red}0.135 & 0.086 & 0.580 \\
\rowcolor{grayish}
& & Conditional & 0.642 & 0.057 & 0.063 & 0.079 & 0.450 \\
\multirow{-4}{*}{\texttt{Crop}} & \multirow{-4}{*}{0.673}
& Marginal & 0.884 & 0.004 & 0.054 & 0.079 & 0.125 \\
& & Conditional & 0.901 & 0.033 & 0.039 & 0.067 & 0.061 \\
\bottomrule
\multirow{-4}{*}{\texttt{WalkingSittingStanding}} & \multirow{-4}{*}{0.962}
\end{tabular}}
\end{table}

\noindent
In this subsection, we test the applicability of our methods on five datasets:
\texttt{Tiselac} \cite{tiselac2017}, \texttt{ElectricDevices} \cite{UCRArchive}, \texttt{PenDigits} \cite{misc_pen-based_recognition_of_handwritten_digits_81}, \texttt{Crop} \cite{tan2017indexing}, and \texttt{WalkingSittingStanding} \cite{misc_human_activity_recognition_using_smartphones_240}.
These datasets are publicly available via the \href{https://www.aeon-toolkit.org/}{aeon} toolkit.
We refer to these as structured datasets as $X \in \mathbb{R}^{t_{\text{max}} \times d}$ and $X^t \in \mathbb{R}^d$. See Table~\ref{tab:structured_datasets} in Appendix~\ref{sec:structure_datasets_details} for more details.

To implement and evaluate our methods, we partition each dataset into four distinct sets: 80\% of the samples are allocated for model fitting, while the remaining samples are equally divided to form $\mathcal{D}_{\text{cal-1}}$, $\mathcal{D}_{\text{cal-2}}$, and $\mathcal{D}_{\text{test}}$. For the marginal method, we set $\mathcal{D}_{\text{cal}} = \mathcal{D}_{\text{cal-1}} \cup \mathcal{D}_{\text{cal-2}}$. In all experiments, we employ an LSTM model \cite{hochreiter1997long} as the base sequential classifier. A detailed description of the model architecture and training strategy is provided in Appendix~\ref{app:structure_datasets_model}.

The results obtained by the marginal and conditional methods are summarized in Table~\ref{tab:results}; see Appendix~\ref{sec:structure_datasets_results} for more detailed results for each dataset.
Following this table, the two methods control the marginal accuracy gap, supporting our theory. However, the marginal method fails to control the conditional risk for sequences with early halt times, in contrast with the conditional approach that attains valid risk control over the accumulated halt times---as guaranteed by our theory. The statistical efficiency of both methods is comparable, as evidenced by the average normalized halt time $T_\text{avg}$ performance metric.
In fact, although the conditional method controls a stronger notion of error, it resulted in a smaller average normalized stopping time $T_{\text{avg}}$ in 2 out of 5 datasets.
We attribute this gain to our decision to employ a vector of thresholds to form the conditional stopping rule, as opposed to the single threshold used in the baseline marginal approach.
Lastly, Figure~\ref{fig:t_avg_vs_alpha} in Appendix~\ref{sec:structure_datasets_results} illustrates the trade-off between the tolerable accuracy gap $\alpha$ and the average stopping time for the \texttt{Tiselac} dataset. There, one can see that the conditional method allows for earlier stopping times when a higher accuracy gap is permitted.

\subsection{An NLP Application}
\label{sec:nlp_application}
We now revisit the reading comprehension task introduced in Section~\ref{sec:teaser}, where the goal is to select the correct answer from a set of four options based on a given context.
To allow the sequential processing of the data, we first divide the context of each question into sentences. These sentences are then grouped into $t_\text{max}=10$ sets. When the total number of sentences cannot be grouped into 10 equally sized sets, we include  the remaining sentences in the last set.
To formulate the input sequence $X^{\leq t}$, we construct a prompt that includes the context sentences up to timestep $t$, along with the question and its four options, labeled `A', `B', `C', and `D'.
The prompt concludes with ``\texttt{The answer is:\textbackslash n\textbackslash n}'', which is then fed to the Vicuna-13B model \cite{zheng2023judging}
to make a prediction; the model is accessible via \href{https://huggingface.co/lmsys/vicuna-13b-v1.5-16k}{HuggingFace}. We employ the vLLM framework \cite{kwon2023efficient} to compute the probability assigned to each of the four options, resulting in $\hat{f}(X^{\leq t}) \in [0,1]^4$.
Lastly, we define the function $\hat{\pi}(X^{\leq t}) = \max \{ \hat{f}_k(X^{\leq t}) : k=1,\ldots,4 \}$, which is utilized to formulate the stopping rule $\hat{\tau}$.

The results obtained by the marginal and conditional methods are presented in Figure~\ref{fig:quality_conditional_vs_marginal}. As portrayed in the left panel, the conditional approach rigorously controls the conditional accuracy gap on the accumulated halt times, in contrast with the marginal method that merely controls the marginal risk. The right panel in Figure~\ref{fig:quality_conditional_vs_marginal} shows that the marginal method tends to stop earlier.
This is also indicated by its lower average normalized halt time of 0.483 compared to 0.831 for the conditional method.
However, this gain is not necessarily desired, as the marginal approach tends to make errors in the early halt times.

Figure~\ref{fig:stage1_vs_stage2} in the Appendix presents an ablation study, underscoring the importance of the testing phase (Stage 2) of the conditional method. As illustrated, the candidate configuration $\hat{\underline{\eta}}$ obtained by the greedy candidate screening algorithm (Stage 1) does not provide rigorous control of the conditional accuracy gap in the sense of \eqref{eq:conditional_guarantee}. This stands in contrast with the conditional method that includes the testing stage. Nevertheless, the candidate $\hat{\underline{\eta}}$ provides a reasonable initial set of configurations for the hyperparameters to be tested, as it yields a stopping rule that roughly centers around the nominal accuracy gap level $\alpha$.

\section{Conclusion}
\label{sec:conclusion}

In this paper, we presented a novel statistical framework that rigorously controls the accuracy gap conditional on the accumulated halt times.
Additionally, we performed a series of numerical experiments that highlight the significance of transitioning from marginal to conditional guarantees, which validates our theory and underscores the practical implications of our proposal.

Our work opens several future research directions.
First, it would be intriguing to design more effective stopping rules at the cost of increasing the computational complexity of the proposed two-stage calibration procedure.
Another direction is to address the limitation of our work---the reliance on the i.i.d. assumption, which may be violated in practice. It will be illuminating to extend the tools we presented and relax this assumption, possibly by relying on the foundations of \citet{barber2023conformal}.
Lastly, a natural progression is to extend the tools we developed to regression problems \cite{ye2023learned}. The challenge here is that the loss $L_{\text{gap}}$ might not be binary, and thus the exact p-value we used in this paper would not be applicable.

\bibliography{main}

\begin{thebibliography}{45}
\providecommand{\natexlab}[1]{#1}
\providecommand{\url}[1]{\texttt{#1}}
\expandafter\ifx\csname urlstyle\endcsname\relax
  \providecommand{\doi}[1]{doi: #1}\else
  \providecommand{\doi}{doi: \begingroup \urlstyle{rm}\Url}\fi

\bibitem[Vovk(2012)]{vovk2012conditional}
Vladimir Vovk.
\newblock Conditional validity of inductive conformal predictors.
\newblock In \emph{Asian conference on machine learning}, pages 475--490. PMLR, 2012.

\bibitem[Lei and Wasserman(2014)]{lei2014distribution}
Jing Lei and Larry Wasserman.
\newblock Distribution-free prediction bands for non-parametric regression.
\newblock \emph{Journal of the Royal Statistical Society Series B: Statistical Methodology}, 76\penalty0 (1):\penalty0 71--96, 2014.

\bibitem[Foygel~Barber et~al.(2021)Foygel~Barber, Candes, Ramdas, and Tibshirani]{foygel2021limits}
Rina Foygel~Barber, Emmanuel~J Candes, Aaditya Ramdas, and Ryan~J Tibshirani.
\newblock The limits of distribution-free conditional predictive inference.
\newblock \emph{Information and Inference: A Journal of the IMA}, 10\penalty0 (2):\penalty0 455--482, 2021.

\bibitem[Pang et~al.(2022)Pang, Parrish, Joshi, Nangia, Phang, Chen, Padmakumar, Ma, Thompson, He, and Bowman]{pang2022quality}
Richard~Yuanzhe Pang, Alicia Parrish, Nitish Joshi, Nikita Nangia, Jason Phang, Angelica Chen, Vishakh Padmakumar, Johnny Ma, Jana Thompson, He~He, and Samuel~R. Bowman.
\newblock {QuALITY}: Question answering with long input texts, yes!
\newblock In \emph{Conference of the North American Chapter of the Association for Computational Linguistics: Human Language Technologies}, pages 5336--5358, 2022.

\bibitem[Angelopoulos et~al.(2021)Angelopoulos, Bates, Cand{\`e}s, Jordan, and Lei]{angelopoulos2021learn}
Anastasios~N Angelopoulos, Stephen Bates, Emmanuel~J Cand{\`e}s, Michael~I Jordan, and Lihua Lei.
\newblock Learn then test: Calibrating predictive algorithms to achieve risk control.
\newblock \emph{arXiv preprint arXiv:2110.01052}, 2021.

\bibitem[Laufer-Goldshtein et~al.(2022)Laufer-Goldshtein, Fisch, Barzilay, and Jaakkola]{laufer2022efficiently}
Bracha Laufer-Goldshtein, Adam Fisch, Regina Barzilay, and Tommi~S Jaakkola.
\newblock Efficiently controlling multiple risks with pareto testing.
\newblock In \emph{International Conference on Learning Representations}, 2022.

\bibitem[Hartvigsen et~al.(2019)Hartvigsen, Sen, Kong, and Rundensteiner]{hartvigsen2019adaptive}
Thomas Hartvigsen, Cansu Sen, Xiangnan Kong, and Elke Rundensteiner.
\newblock Adaptive-halting policy network for early classification.
\newblock In \emph{Proceedings of the 25th ACM SIGKDD International Conference on Knowledge Discovery \& Data Mining}, pages 101--110, 2019.

\bibitem[Gupta et~al.(2020)Gupta, Gupta, Biswas, and Dutta]{gupta2020approaches}
Ashish Gupta, Hari~Prabhat Gupta, Bhaskar Biswas, and Tanima Dutta.
\newblock Approaches and applications of early classification of time series: A review.
\newblock \emph{IEEE Transactions on Artificial Intelligence}, 1\penalty0 (1):\penalty0 47--61, 2020.

\bibitem[Ghodrati et~al.(2021)Ghodrati, Bejnordi, and Habibian]{ghodrati2021frameexit}
Amir Ghodrati, Babak~Ehteshami Bejnordi, and Amirhossein Habibian.
\newblock Frameexit: Conditional early exiting for efficient video recognition.
\newblock In \emph{Proceedings of the IEEE/CVF Conference on Computer Vision and Pattern Recognition}, pages 15608--15618, 2021.

\bibitem[Sabet et~al.(2021)Sabet, Hare, Al-Hashimi, and Merrett]{sabet2021temporal}
Amin Sabet, Jonathon Hare, Bashir Al-Hashimi, and Geoff~V Merrett.
\newblock Temporal early exits for efficient video object detection.
\newblock \emph{arXiv preprint arXiv:2106.11208}, 2021.

\bibitem[Tang et~al.(2022)Tang, Kumar, Xin, Vyas, Li, Yang, Mao, Murray, and Lin]{tang2022temporal}
Raphael Tang, Karun Kumar, Ji~Xin, Piyush Vyas, Wenyan Li, Gefei Yang, Yajie Mao, Craig Murray, and Jimmy Lin.
\newblock Temporal early exiting for streaming speech commands recognition.
\newblock In \emph{International Conference on Acoustics, Speech and Signal Processing}, pages 7567--7571. IEEE, 2022.

\bibitem[Hartvigsen et~al.(2022)Hartvigsen, Gerych, Thadajarassiri, Kong, and Rundensteiner]{hartvigsen2022stop}
Thomas Hartvigsen, Walter Gerych, Jidapa Thadajarassiri, Xiangnan Kong, and Elke Rundensteiner.
\newblock Stop\&hop: Early classification of irregular time series.
\newblock In \emph{Proceedings of the 31st ACM International Conference on Information \& Knowledge Management}, pages 696--705, 2022.

\bibitem[Chen et~al.(2022)Chen, Zhang, Tian, Hou, Ma, and Zhou]{chen2022decoupled}
Huiling Chen, Ye~Zhang, Aosheng Tian, Yi~Hou, Chao Ma, and Shilin Zhou.
\newblock Decoupled early time series classification using varied-length feature augmentation and gradient projection technique.
\newblock \emph{Entropy}, 24\penalty0 (10):\penalty0 1477, 2022.

\bibitem[Shekhar et~al.(2023)Shekhar, Eswaran, Hooi, Elmer, Faloutsos, and Akoglu]{shekhar2023benefit}
Shubhranshu Shekhar, Dhivya Eswaran, Bryan Hooi, Jonathan Elmer, Christos Faloutsos, and Leman Akoglu.
\newblock Benefit-aware early prediction of health outcomes on multivariate {EEG} time series.
\newblock \emph{Journal of Biomedical Informatics}, 139:\penalty0 104296, 2023.

\bibitem[Vovk et~al.(2005)Vovk, Gammerman, and Shafer]{vovk2005algorithmic}
Vladimir Vovk, Alexander Gammerman, and Glenn Shafer.
\newblock \emph{Algorithmic learning in a random world}, volume~29.
\newblock Springer, 2005.

\bibitem[Papadopoulos and Haralambous(2011)]{papadopoulos2011reliable}
Harris Papadopoulos and Haris Haralambous.
\newblock Reliable prediction intervals with regression neural networks.
\newblock \emph{Neural Networks}, 24\penalty0 (8):\penalty0 842--851, 2011.

\bibitem[Lei et~al.(2018)Lei, G’Sell, Rinaldo, Tibshirani, and Wasserman]{lei2018distribution}
Jing Lei, Max G’Sell, Alessandro Rinaldo, Ryan~J Tibshirani, and Larry Wasserman.
\newblock Distribution-free predictive inference for regression.
\newblock \emph{Journal of the American Statistical Association}, 113\penalty0 (523):\penalty0 1094--1111, 2018.

\bibitem[Tibshirani et~al.(2019)Tibshirani, Foygel~Barber, Candes, and Ramdas]{tibshirani2019conformal}
Ryan~J Tibshirani, Rina Foygel~Barber, Emmanuel Candes, and Aaditya Ramdas.
\newblock Conformal prediction under covariate shift.
\newblock \emph{Advances in neural information processing systems}, 32, 2019.

\bibitem[Romano et~al.(2020)Romano, Sesia, and Candes]{romano2020classification}
Yaniv Romano, Matteo Sesia, and Emmanuel Candes.
\newblock Classification with valid and adaptive coverage.
\newblock \emph{Advances in Neural Information Processing Systems}, 33:\penalty0 3581--3591, 2020.

\bibitem[Bates et~al.(2021)Bates, Angelopoulos, Lei, Malik, and Jordan]{bates2021distribution}
Stephen Bates, Anastasios Angelopoulos, Lihua Lei, Jitendra Malik, and Michael Jordan.
\newblock Distribution-free, risk-controlling prediction sets.
\newblock \emph{Journal of the ACM (JACM)}, 68\penalty0 (6):\penalty0 1--34, 2021.

\bibitem[Angelopoulos and Bates(2023)]{angelopoulos2021gentle}
Anastasios~N. Angelopoulos and Stephen Bates.
\newblock Conformal prediction: A gentle introduction.
\newblock \emph{Foundations and Trends® in Machine Learning}, 16\penalty0 (4):\penalty0 494--591, 2023.
\newblock ISSN 1935-8237.

\bibitem[Gibbs and Candes(2021)]{gibbs2021adaptive}
Isaac Gibbs and Emmanuel Candes.
\newblock Adaptive conformal inference under distribution shift.
\newblock In \emph{Advances in Neural Information Processing Systems}, 2021.

\bibitem[Lin et~al.(2022)Lin, Trivedi, and Sun]{lin2022conformal}
Zhen Lin, Shubhendu Trivedi, and Jimeng Sun.
\newblock Conformal prediction intervals with temporal dependence.
\newblock \emph{Transactions on Machine Learning Research}, 2022.
\newblock ISSN 2835-8856.

\bibitem[Angelopoulos et~al.(2022)Angelopoulos, Bates, Fisch, Lei, and Schuster]{angelopoulos2022conformal}
Anastasios~N Angelopoulos, Stephen Bates, Adam Fisch, Lihua Lei, and Tal Schuster.
\newblock Conformal risk control.
\newblock \emph{arXiv preprint arXiv:2208.02814}, 2022.

\bibitem[Fisch et~al.(2022)Fisch, Jaakkola, and Barzilay]{fisch2022calibrated}
Adam Fisch, Tommi~S. Jaakkola, and Regina Barzilay.
\newblock Calibrated selective classification.
\newblock \emph{Transactions on Machine Learning Research}, 2022.
\newblock ISSN 2835-8856.

\bibitem[Feldman et~al.(2023)Feldman, Ringel, Bates, and Romano]{feldman2023achieving}
Shai Feldman, Liran Ringel, Stephen Bates, and Yaniv Romano.
\newblock Achieving risk control in online learning settings.
\newblock \emph{Transactions on Machine Learning Research}, 2023.
\newblock ISSN 2835-8856.

\bibitem[Lee et~al.(2023)Lee, Huang, Hassani, and Dobriban]{lee2023t}
Donghwan Lee, Xinmeng Huang, Hamed Hassani, and Edgar Dobriban.
\newblock T-cal: An optimal test for the calibration of predictive models.
\newblock \emph{Journal of Machine Learning Research}, 24\penalty0 (335):\penalty0 1--72, 2023.

\bibitem[Cauchois et~al.(2023)Cauchois, Gupta, Ali, and Duchi]{cauchois2023robust}
Maxime Cauchois, Suyash Gupta, Alnur Ali, and John~C Duchi.
\newblock Robust validation: Confident predictions even when distributions shift.
\newblock \emph{Journal of the American Statistical Association}, pages 1--22, 2023.

\bibitem[Barber et~al.(2023)Barber, Candes, Ramdas, and Tibshirani]{barber2023conformal}
Rina~Foygel Barber, Emmanuel~J Candes, Aaditya Ramdas, and Ryan~J Tibshirani.
\newblock Conformal prediction beyond exchangeability.
\newblock \emph{The Annals of Statistics}, 51\penalty0 (2):\penalty0 816--845, 2023.

\bibitem[Laufer-Goldshtein et~al.(2023)Laufer-Goldshtein, Fisch, Barzilay, and Jaakkola]{laufer2023risk}
Bracha Laufer-Goldshtein, Adam Fisch, Regina Barzilay, and Tommi Jaakkola.
\newblock Risk-controlling model selection via guided bayesian optimization.
\newblock \emph{arXiv preprint arXiv:2312.01692}, 2023.

\bibitem[Schuster et~al.(2021)Schuster, Fisch, Jaakkola, and Barzilay]{schuster2021consistent}
Tal Schuster, Adam Fisch, Tommi Jaakkola, and Regina Barzilay.
\newblock Consistent accelerated inference via confident adaptive transformers.
\newblock In \emph{Conference on Empirical Methods in Natural Language Processing}, pages 4962--4979, 2021.

\bibitem[Schuster et~al.(2022)Schuster, Fisch, Gupta, Dehghani, Bahri, Tran, Tay, and Metzler]{schuster2022confident}
Tal Schuster, Adam Fisch, Jai Gupta, Mostafa Dehghani, Dara Bahri, Vinh Tran, Yi~Tay, and Donald Metzler.
\newblock Confident adaptive language modeling.
\newblock \emph{Advances in Neural Information Processing Systems}, 35:\penalty0 17456--17472, 2022.

\bibitem[Brown et~al.(2001)Brown, Cai, and DasGupta]{brown2001interval}
Lawrence~D Brown, T~Tony Cai, and Anirban DasGupta.
\newblock Interval estimation for a binomial proportion.
\newblock \emph{Statistical science}, 16\penalty0 (2):\penalty0 101--133, 2001.

\bibitem[Miller(2012)]{miller2012simultaneous}
R.G.J. Miller.
\newblock \emph{Simultaneous Statistical Inference}.
\newblock Springer Series in Statistics. Springer New York, 2012.

\bibitem[Bauer(1991)]{bauer1991multiple}
Peter Bauer.
\newblock Multiple testing in clinical trials.
\newblock \emph{Statistics in medicine}, 10\penalty0 (6):\penalty0 871--890, 1991.

\bibitem[Ienco(2017)]{tiselac2017}
Dino Ienco.
\newblock Tiselac: time series land cover classification challenge, 2017.
\newblock \url{https://www.timeseriesclassification.com/description.php?Dataset=Tiselac}.

\bibitem[Chen et~al.(2015)Chen, Keogh, Hu, Begum, Bagnall, Mueen, and Batista]{UCRArchive}
Yanping Chen, Eamonn Keogh, Bing Hu, Nurjahan Begum, Anthony Bagnall, Abdullah Mueen, and Gustavo Batista.
\newblock The {UCR} time series classification archive, July 2015.
\newblock \url{www.cs.ucr.edu/~eamonn/time_series_data/}.

\bibitem[Alpaydin and Alimoglu(1998)]{misc_pen-based_recognition_of_handwritten_digits_81}
E.~Alpaydin and Fevzi. Alimoglu.
\newblock Pen-based recognition of handwritten digits.
\newblock UCI Machine Learning Repository, 1998.

\bibitem[Tan et~al.(2017)Tan, Webb, and Petitjean]{tan2017indexing}
Chang~Wei Tan, Geoffrey~I Webb, and Fran{\c{c}}ois Petitjean.
\newblock Indexing and classifying gigabytes of time series under time warping.
\newblock In \emph{SIAM international conference on data mining}, pages 282--290. SIAM, 2017.

\bibitem[Reyes-Ortiz et~al.(2012)Reyes-Ortiz, Anguita, Ghio, Oneto, and Parra]{misc_human_activity_recognition_using_smartphones_240}
Jorge Reyes-Ortiz, Davide Anguita, Alessandro Ghio, Luca Oneto, and Xavier Parra.
\newblock Human activity recognition using smartphones.
\newblock UCI Machine Learning Repository, 2012.

\bibitem[Hochreiter and Schmidhuber(1997)]{hochreiter1997long}
Sepp Hochreiter and J{\"u}rgen Schmidhuber.
\newblock Long short-term memory.
\newblock \emph{Neural computation}, 9\penalty0 (8):\penalty0 1735--1780, 1997.

\bibitem[Zheng et~al.(2023)Zheng, Chiang, Sheng, Zhuang, Wu, Zhuang, Lin, Li, Li, Xing, Zhang, Gonzalez, and Stoica]{zheng2023judging}
Lianmin Zheng, Wei-Lin Chiang, Ying Sheng, Siyuan Zhuang, Zhanghao Wu, Yonghao Zhuang, Zi~Lin, Zhuohan Li, Dacheng Li, Eric~P. Xing, Hao Zhang, Joseph~E. Gonzalez, and Ion Stoica.
\newblock Judging {LLM-as-a-judge with MT-Bench and Chatbot Arena}.
\newblock \emph{arXiv preprint arXiv:2306.05685}, 2023.

\bibitem[Kwon et~al.(2023)Kwon, Li, Zhuang, Sheng, Zheng, Yu, Gonzalez, Zhang, and Stoica]{kwon2023efficient}
Woosuk Kwon, Zhuohan Li, Siyuan Zhuang, Ying Sheng, Lianmin Zheng, Cody~Hao Yu, Joseph~E. Gonzalez, Hao Zhang, and Ion Stoica.
\newblock Efficient memory management for large language model serving with pagedattention.
\newblock In \emph{Proceedings of the ACM SIGOPS 29th Symposium on Operating Systems Principles}, 2023.

\bibitem[Ye et~al.(2023)Ye, Han, Liu, Angelopoulos, Griffith, Monakhova, and You]{ye2023learned}
Cassandra~Tong Ye, Jiashu Han, Kunzan Liu, Anastasios Angelopoulos, Linda Griffith, Kristina Monakhova, and Sixian You.
\newblock Learned, uncertainty-driven adaptive acquisition for photon-efficient multiphoton microscopy.
\newblock \emph{arXiv preprint arXiv:2310.16102}, 2023.

\bibitem[Kingma and Ba(2014)]{kingma2014adam}
Diederik~P Kingma and Jimmy Ba.
\newblock Adam: A method for stochastic optimization.
\newblock \emph{arXiv preprint arXiv:1412.6980}, 2014.

\end{thebibliography}
\bibliographystyle{unsrtnat}

\newpage
\appendix
\onecolumn
\renewcommand{\thefigure}{\Alph{section}.\arabic{figure}}
\renewcommand{\thetable}{\Alph{section}.\arabic{table}}
\renewcommand{\thealgorithm}{\Alph{section}.\arabic{algorithm}}
\section{Marginal Risk Control Algorithm}
\label{app:marginal_algo}
Algorithm~\ref{alg:marginal_risk_control} presents the marginal method described in Section~\ref{sec:marginal_risk_control}.
\begin{algorithm}[ht]
\caption{Fixed sequence testing for marginal risk control}
\label{alg:marginal_risk_control}
\begin{algorithmic}[1]
\STATE \textbf{Input:} Calibration set $\mathcal{D}_{\text{cal}} = \{(X_i,Y_i)\}_{i=1}^{n_{\text{cal}}}$, tolerable accuracy gap $\alpha$, significance level $\delta$, grid resolution $\Delta$.
\STATE $\hat{\lambda} \leftarrow \infty$ \hspace{0.1cm}// Use $\infty$ as a fallback if the first null is not rejected.
\STATE $\lambda \leftarrow 1$  \hspace{0.1cm}// Start testing with the largest $\lambda \in \Lambda$.
\WHILE{$\lambda \geq 0$}
    \STATE $\hat{R}_{\text{gap}} \leftarrow \frac{1}{n_{\text{cal}}}\sum_{i=1}^{n_{\text{cal}}} L_{\text{gap}}(Y_i, \hat{Y}^{\text{full}}_i, \hat{Y}^{\text{early}}_i(\tau_\lambda))$ \hspace{0.1cm} // Compute the empircal risk.
    \STATE $\hat{p} \leftarrow \text{CDF}_{\text{bin}}\left(\hat{R}_{\text{gap}} \cdot n_{\text{cal}}; n_{\text{cal}}, \alpha\right)$ \hspace{0.1cm} // Compute a p-value.
    \IF{$\hat{p} > \delta$}
    \STATE \textbf{break}  \hspace{0.1cm} // Failed to reject the null, stop testing.
    \ENDIF
    \STATE $\hat{\lambda} \leftarrow \lambda$  \hspace{0.1cm} // $H_{0,\lambda}$ was rejected, update the chosen \( \hat{\lambda} \).
    \STATE $\lambda \leftarrow \lambda - \Delta$  \hspace{0.1cm} // Next test will test a lower threshold.
\ENDWHILE
\STATE \textbf{Output:} $\hat{\lambda}$
\end{algorithmic}
\end{algorithm}

\section{Proofs}
\label{app:proofs}

\begin{proof}[Proof of Proposition~\ref{prop:marginal_risk_control}]
The validity of the proposition is a direct consequence of using fixed sequence testing.
For completeness, we add a proof that fixed sequence testing controls the FWER at level $\delta$.
Denote by $H_{0,{j}}$ the $j$-th ordered hypothesis.
If all the hypotheses are false, we trivially get that $\mathbb{P}(V \geq 1) = 0$.
Next, denote the index of the first true null by $j_0$, i.e., $H_{0,{j_0}}$ is true and the preceding $H_{0,{j'}}, \ j'<j_0$ are false. By the construction of the fixed sequence testing procedure, we may encounter this first true null only at step $j_0$ of the procedure. Now, observe that $\mathbb{P}(V \geq 1) = 1- \mathbb{P}(V = 0) = 1 - \mathbb{P}(\hat{p}_{\lambda_{j_0}}>\delta) = \mathbb{P}(\hat{p}_{\lambda_{j_0}}\leq\delta)\leq \delta$. Above, the second equality holds since the testing procedure stops the first time that any p-value exceeds $\delta$, and thus we get $V = 0$ if and only if    $\hat{p}_{\lambda_{j_0}}>\delta$; under this event, the procedure would terminate without rejecting $H_{0,\lambda_{j_0}}$ and $H_{0,\lambda_{j'}}, \ j'>j_0$. The last inequality follows from the validity of the p-value under the null \eqref{eq:p_value_property}.

\end{proof}

\begin{proof}[Proof of Proposition~\ref{prop:conditional_risk_control}]
To prove the result, it suffices to show that Algorithm~\ref{alg:valid_stage} controls the FWER at level $\delta$. First, observe that the outer loop in Algorithm~\ref{alg:valid_stage} tests the hypotheses $H_{0,\underline{\lambda}^t}$ sequentially, starting from $\underline{\lambda}^{t_{\text{max}}}$ down to $\underline{\lambda}^{1}$. As such, it follows the protocol of fixed sequence testing for FWER control. 
Second, each of the p-values $\hat{p}_{\underline{\lambda}^t}^{t'}$, corresponding to the finer null hypotheses \eqref{eq:conditional_null_t}, are valid since they are calculated using i.i.d. samples from the distribution $P_{XY \mid \hat{\tau}(X)\leq t'}$.
Third, the max p-value $\hat{p}_{\underline{\lambda}^t} = \max\{ \hat{p}_{\underline{\lambda}^t}^{t'} : t' \geq t\}$ used to test each of $H_{0,\underline{\lambda}^t}$ satisfies $\mathbb{P} \left ( \hat{p}_{\underline{\lambda}^t} \leq \delta \mid H_{0,\underline{\lambda}^t} \right ) \leq \delta$ \citep[Proposition 6]{angelopoulos2021learn}.
Combining these three arguments completes the proof.
\end{proof}

\section{Further Details on Experiments with Structured Datasets}
\label{app:structure_datasets}
In Section~\ref{sec:application_to_structured_data} of the main manuscript, we introduce the structured datasets on which we applied our methods. Here, we provide more details on each dataset, elaborate on the model architecture and training strategy, and present additional results.

\subsection{Datasets}
\label{sec:structure_datasets_details}

Table~\ref{tab:structured_datasets} provides more details on each dataset.

\begin{table}[h!]
\caption{Summary of structured datasets.}
\label{tab:structured_datasets}
\centering
\begin{tabular}{l c c c c l} 
\toprule
Dataset & \#Samples & \#Timesteps & \#Features & \#Classes & Type \\ [0.5ex]
\midrule
    \texttt{Tiselac}                & 99687 & 23    & 10    & 9     & Image    \\
    \texttt{ElectricDevices}        & 16637 & 96    & 1     & 7     & Device   \\
    \texttt{PenDigits}              & 10992 & 8     & 2     & 10    & Motion   \\
    \texttt{Crop}                   & 24000 & 46    & 1     & 24    & Image    \\
    \texttt{WalkingSittingStanding} & 10299 & 206   & 3     & 6     & Motion    \\ [1ex] 
\bottomrule
\end{tabular}
\end{table}

\subsection{Model Architecture and Training Strategy}
\label{app:structure_datasets_model}

We used a standard LSTM for feature extraction with one recurrent layer with a hidden size of 32, except for \texttt{WalkingSittingStanding} where we used 2 recurrent layers, each with a hidden size of 256. The output of the last recurrent layer is plugged to two fully connected classification heads, one for classifying the label $\hat{f}(X^{\leq t}) \in [0,1]^K$ and the other for estimating the confidence in the classification ${\hat{\pi}}(X^{\leq t}) \in [0,1]$. 
The loss $L_{\text{CE}}^{\hat{f}}$ for updating $\hat{f}$ is the cross-entropy, and the loss $L_{\text{BCE}}^{{\hat{\pi}}}$ for updating $\hat{\pi}$ is the binary cross-entropy. The whole network is trained to minimize $L_{\text{CE}}^{{\hat{f}}}(\hat{f}(X^{\leq t}),Y) + \gamma \cdot L_{\text{BCE}}^{{\hat{\pi}}}(\hat{\pi}(X^{\leq t}),B(X^{\leq t}))$, where the function $B \in \{0,1\}$ returns the value $1$ if $\hat{f}(X^{\leq t})$ correctly predicts the label $Y$ and zero otherwise. We set the hyperparameter $\gamma$ to $0.2$ in all experiments. We augment the data by fitting the model on all possible prefixes $X^{\leq t}$, $t=1,\ldots,t_{\text{max}}$. The optimizer used to minimize the objective function is Adam \cite{kingma2014adam}, with a learning rate of $0.001$, and a batch size of 64.
We allocate $1/8$ of the training samples to a validation set and optimize the model on the remaining $7/8$ of the samples. Training continues until there is no improvement in the loss on the validation set for 30 epochs. The model with the best validation set loss is then saved.

\subsection{Additional Results}
\label{sec:structure_datasets_results}

Table~\ref{tab:results} from the main manuscript summarizes the performance of the marginal and conditional methods on the structured datasets. In addition, Figure~\ref{fig:structured_datasets_results} presents more detailed results, illustrating the accumulated accuracy gap and accumulated stopping times as a function of $t$ obtained by the marginal and conditional methods.

\begin{figure}[ht]
        \centering
        {%
        \subfloat[\texttt{Tiselac}]{%
        \includegraphics[width=0.47\textwidth]{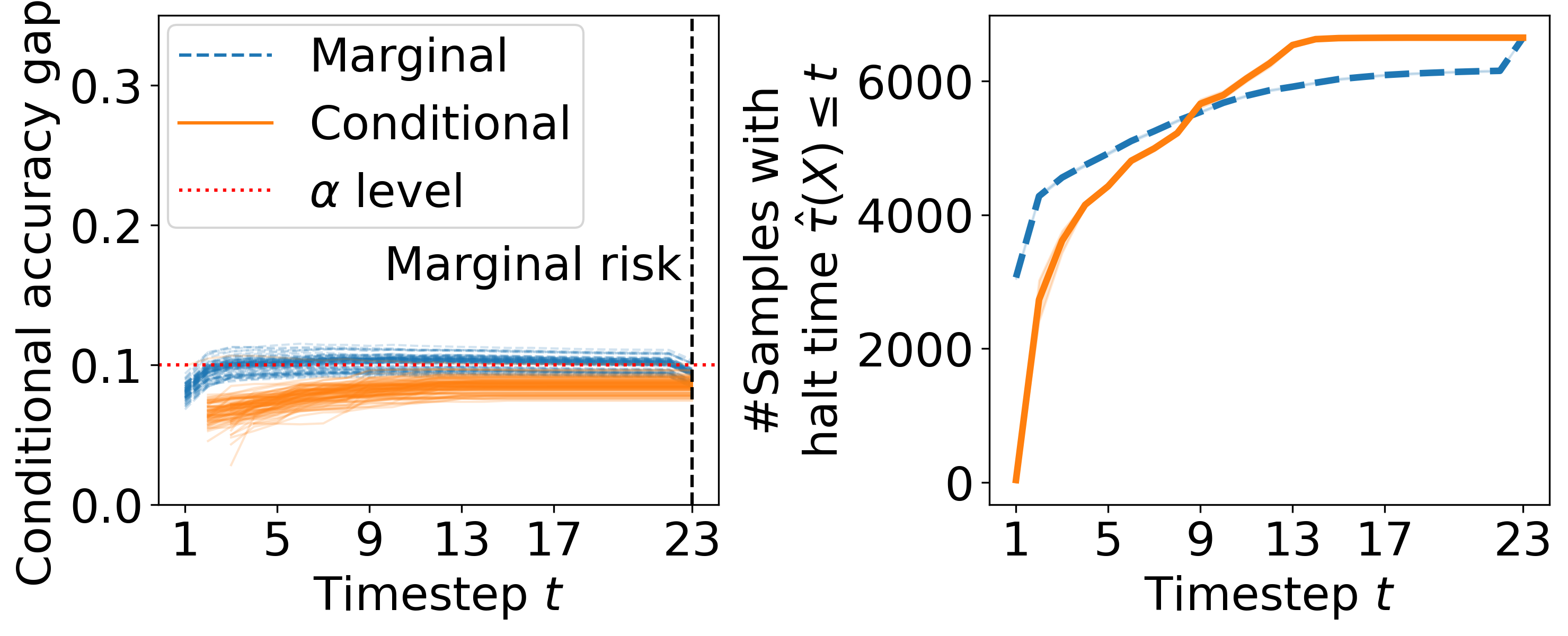}%
        }
        }%
        \hspace{20pt}
        {%
        \subfloat[\texttt{ElectricDevices}]{%
        \includegraphics[width=0.47\textwidth]{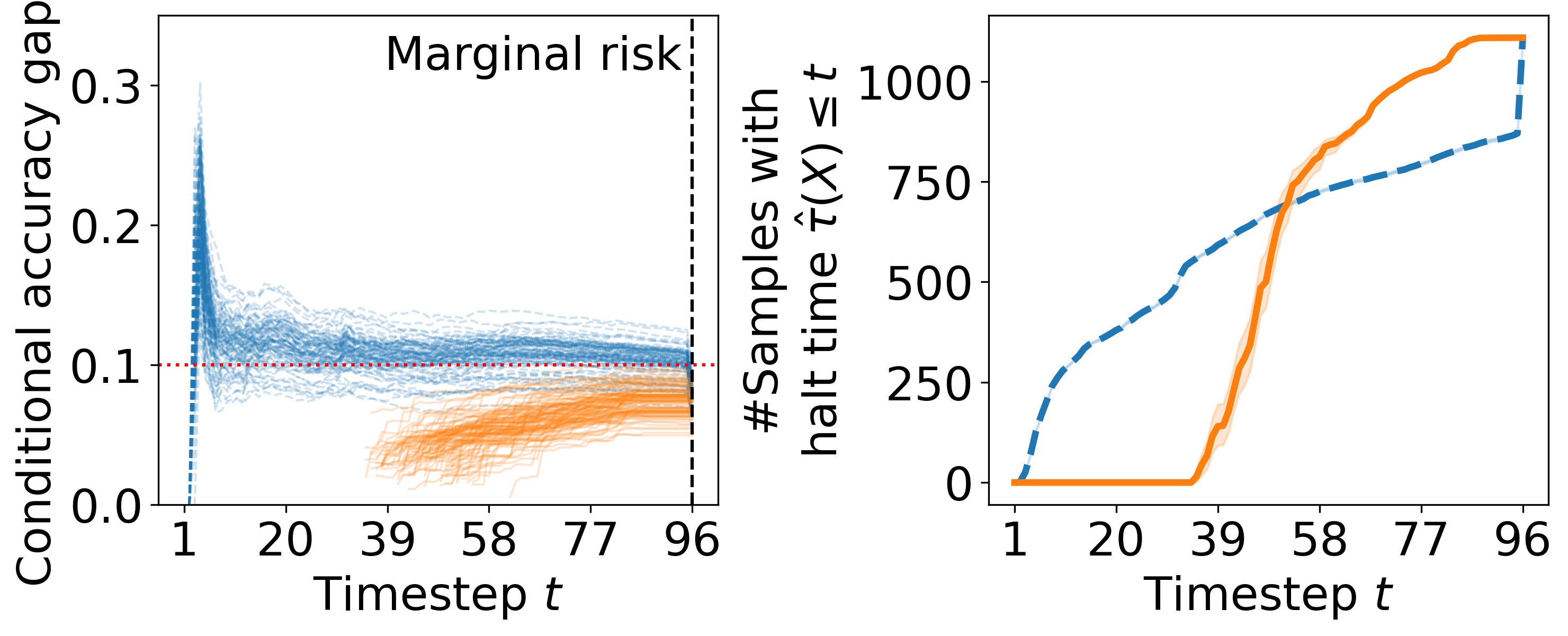}%
        }
        }%
        \hfill
        {%
        \subfloat[\texttt{PenDigits}]{%
        \includegraphics[width=0.47\textwidth]{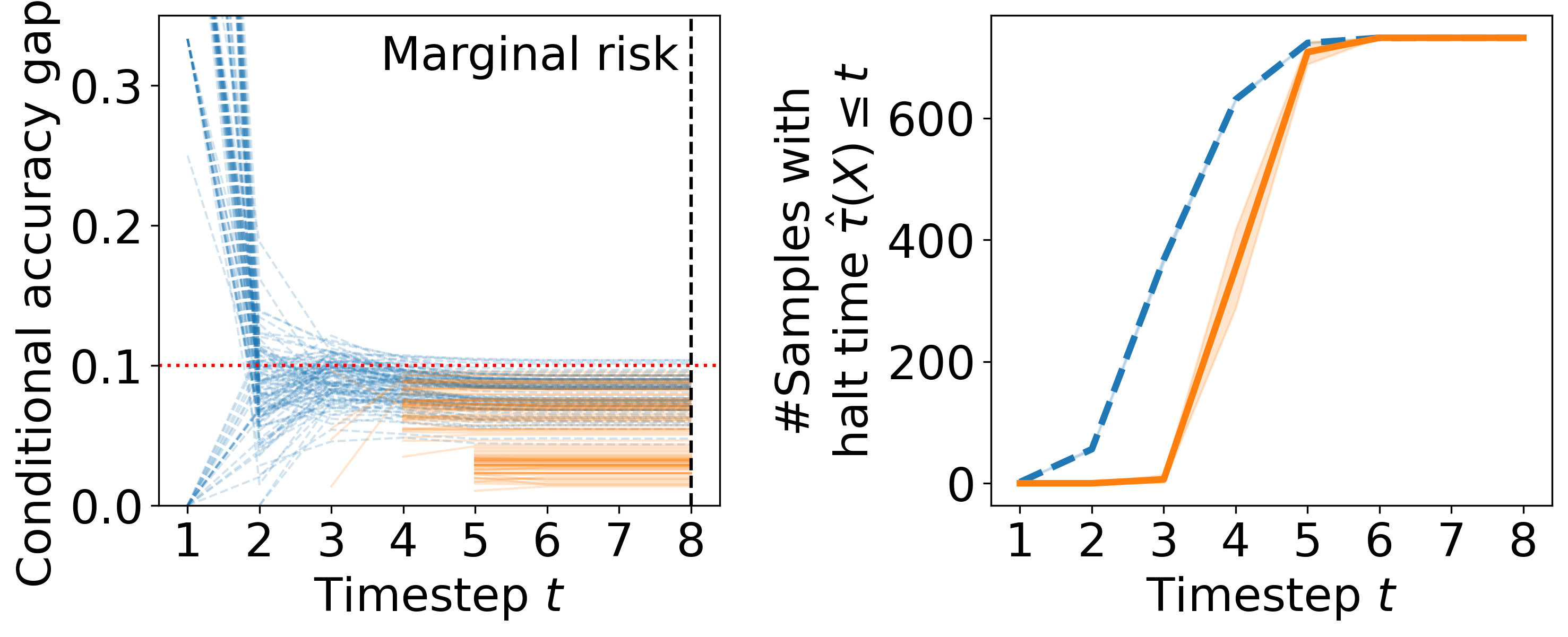}%
        }
        }%
        \hspace{20pt}
        {%
        \subfloat[\texttt{Crop}]{%
        \includegraphics[width=0.47\textwidth]{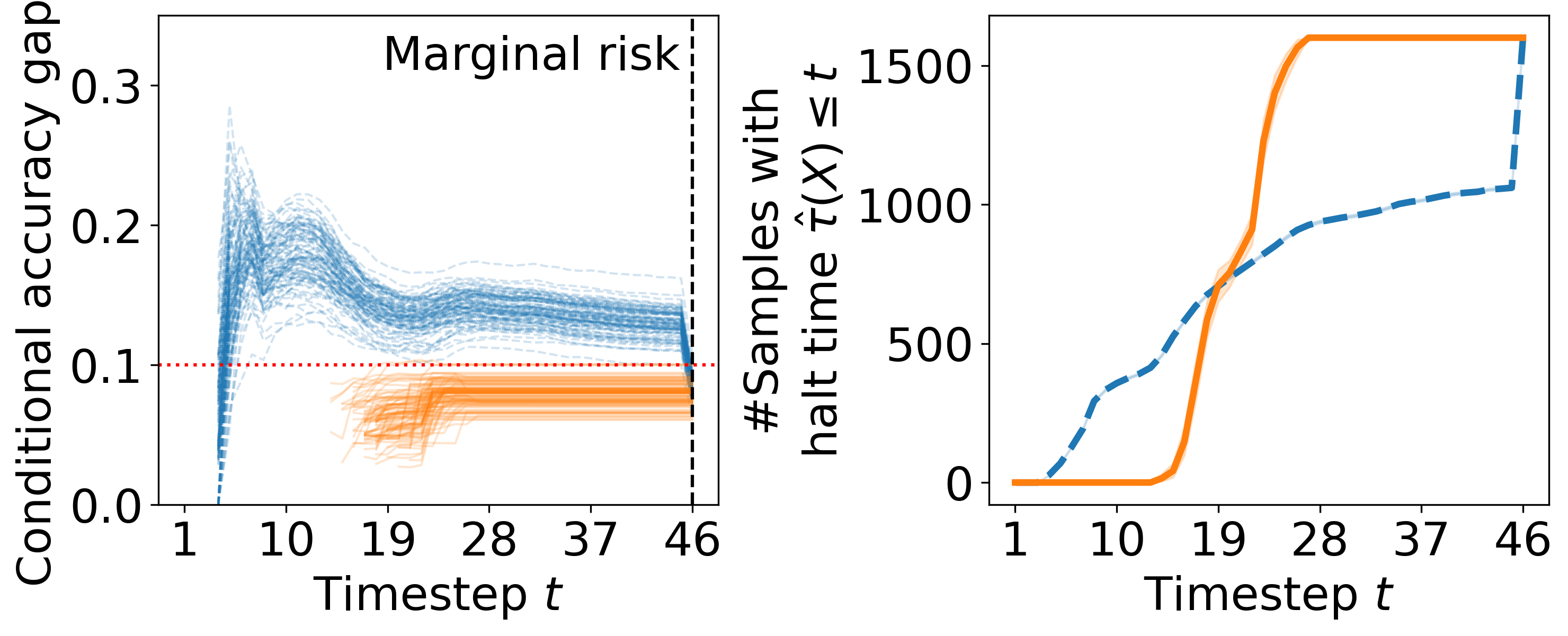}%
        }
        }%
        \hfill
        {%
        \subfloat[\texttt{WalkingSittingStanding}]{%
        \includegraphics[width=0.47\textwidth]{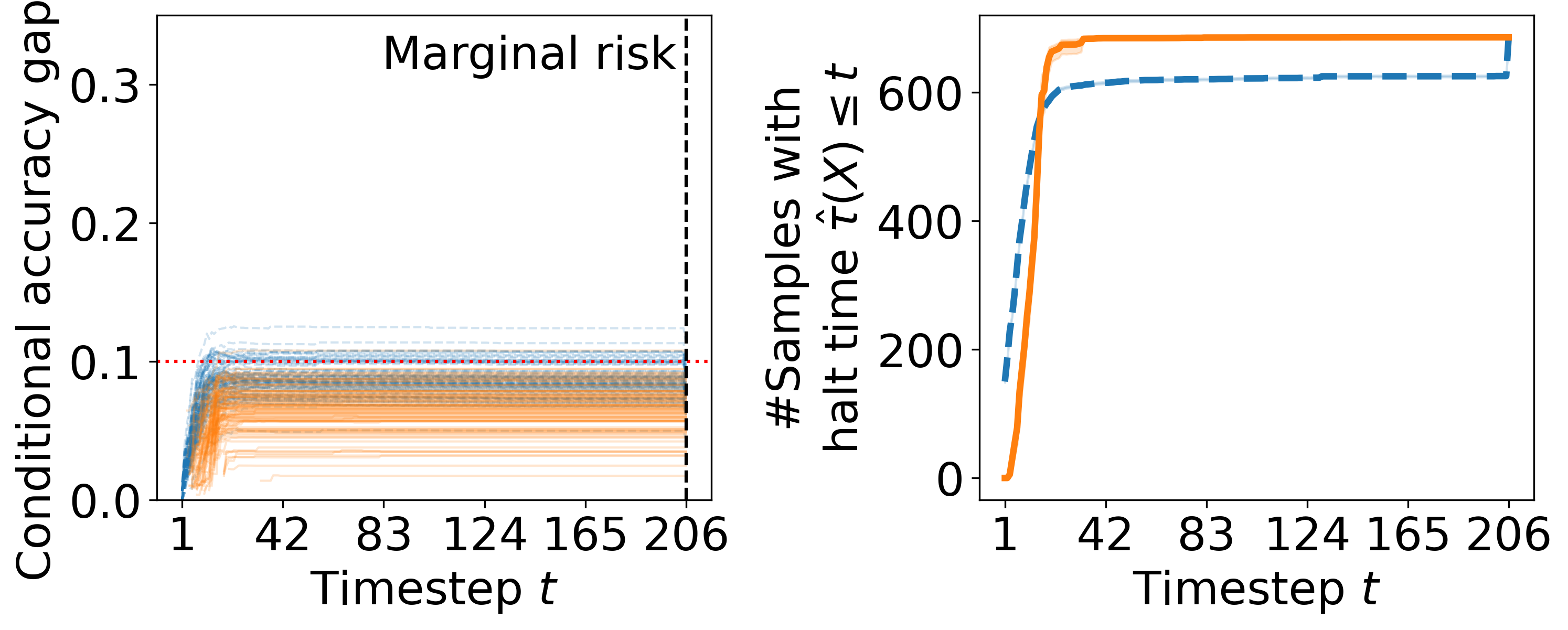}%
        }
        }%
        \caption{\textbf{Comparison between the marginal and conditional methods for the structured datasets.} The other details are as in Figure~\ref{fig:quality_conditional_vs_marginal}.}
        \label{fig:structured_datasets_results}
\end{figure}

Figure~\ref{fig:t_avg_vs_alpha} shows how different error levels $\alpha$ affect the average halt times with the conditional method.
As expected, when allowing for a higher level of risk, the calibration manages to identify thresholds that result in shorter halt times.

\begin{figure}[ht]
\centering
        \includegraphics[width=0.5\textwidth]{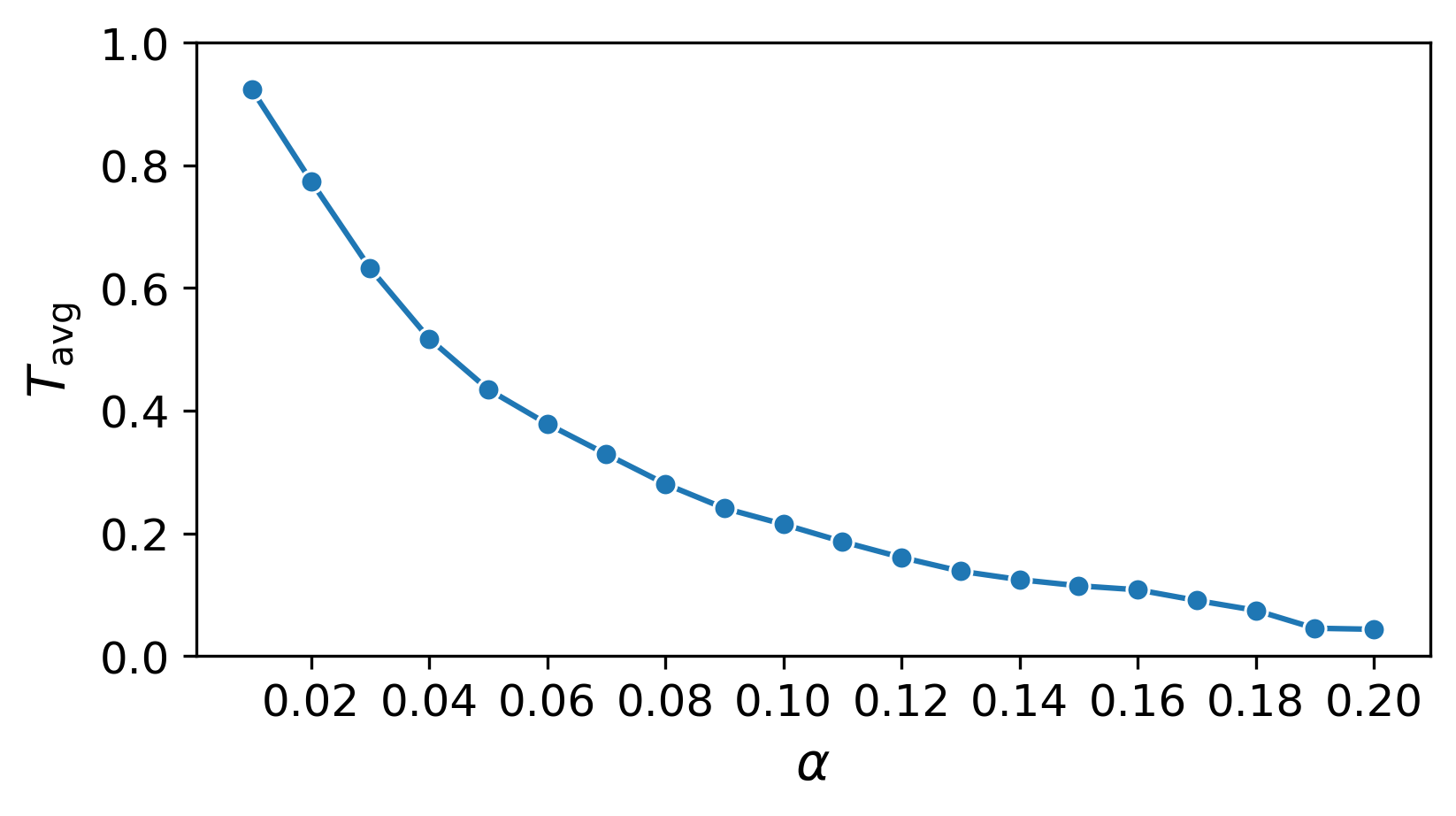}
        \caption{\textbf{Normalized halt time $T_{\text{avg}}$ vs. tolerable accuracy gap $\alpha$}. The results are averaged over 100 random splits of the \texttt{Tiselac} dataset, with (tiny) standard error bars.}
        \label{fig:t_avg_vs_alpha}
\end{figure}

\section{Ablation Study on the NLP Application}
In this section, we present an ablation study to assess the significance of the second stage in the conditional method: the testing phase.
Figure~\ref{fig:stage1_vs_stage2} summarizes the results discussed in Section~\ref{sec:nlp_application} of the main manuscript.

\begin{figure}[ht]
\centering
    {%
    \subfloat{%
    \includegraphics[width=0.48\textwidth]{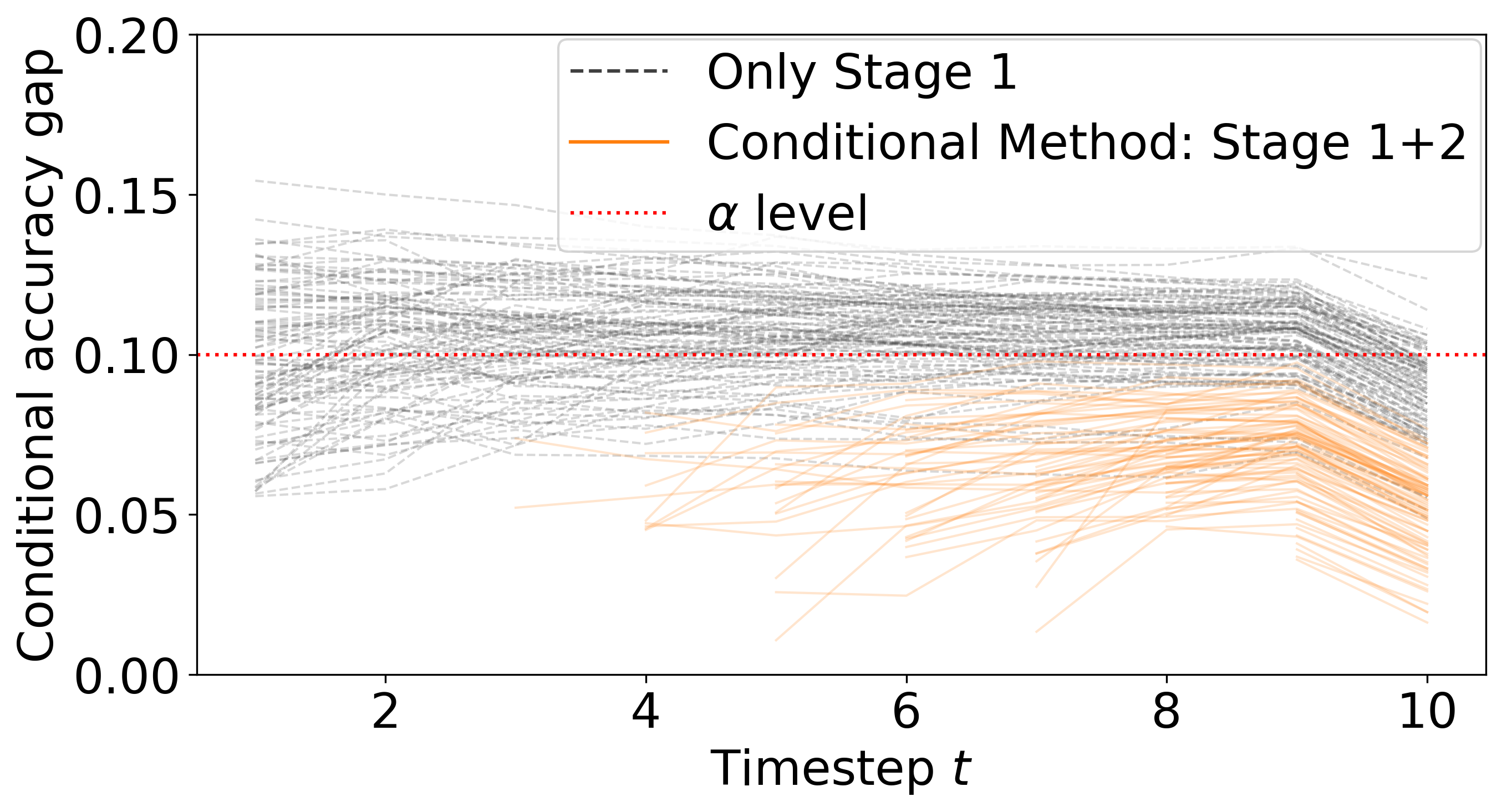}%
    \label{fig:quality_stage1_vs_stage2}%
    }
    }%
    \caption{\textbf{The importance of the testing procedure---Stage 2.} Comparison of conditional accuracy gap obtained by candidate screening (Stage 1, black curves) and by the full conditional method (Stage 1+2, orange curves). The results are presented for 100 random calibration/test splits of the \texttt{QuALITY} dataset, with each curve corresponding to a different split. }
    \label{fig:stage1_vs_stage2}
\end{figure}


\end{document}